\documentclass{article}
\usepackage[numbers, sort&compress]{natbib}
\usepackage{arxiv}

\usepackage{algorithm, algorithmic}

\usepackage[utf8]{inputenc} 
\usepackage[T1]{fontenc}    
\usepackage{hyperref}       
\usepackage{url}            
\usepackage{booktabs}       
\usepackage{amsfonts}       
\usepackage{nicefrac}       
\usepackage{microtype}      
\usepackage{xcolor}         

\usepackage{amsmath}
\usepackage{graphicx}
\usepackage{amssymb}
\usepackage{amsthm}

\newtheorem{theorem}{Theorem}

\newtheorem{proposition}[theorem]{Proposition}
\newtheorem{corollary}[theorem]{Corollary}

\theoremstyle{definition}

\theoremstyle{remark}
\newtheorem{remark}{Remark}
\title{Composing Diffusion Priors with Explicit Physical Context via Generative Gibbs Sampling}

\author{
    Weizhou Wang \\
  Department of Chemistry\\
  University of Chicago\\
  Chicago, IL 60637 \\
  \texttt{weizhouwang@uchicago.edu} \\
    \And
    Jonathan Weare \\
    Courant Institute of Mathematical Sciences \\
    New York University \\
    New York, NY 10012  \\
    \texttt{weare@nyu.edu} \\
    \And
    Aaron R. Dinner\\
  Department of Chemistry\\
  University of Chicago\\
  Chicago, IL 60637 \\
\texttt{dinner@uchicago.edu} \\
}

\begin{document}

\maketitle

\begin{abstract}
Pretrained diffusion models provide powerful learned priors, but in scientific sampling the target distribution often depends on physical context that is not fully represented by one generative model. We introduce Generative Gibbs for Physics-Aware Sampling (GG-PA), a training-free framework that formulates the composition of learned partial priors and explicit physical context as inference over a joint target distribution in an augmented state space. We derive a Gibbs sampler for this joint target, show that it is asymptotically exact as the diffusion time approaches zero, and prove that in settings with quadratic interactions it remains exact at finite diffusion times. We further introduce replica exchange over diffusion time to accelerate mixing. Experiments on a double-well system, a $\phi^4$ lattice model, and atomistic peptide systems show that GG-PA recovers context-induced distribution shifts and emergent collective behavior in interacting systems using partial priors without retraining. These results demonstrate GG-PA as a practical approach for combining pretrained generative priors with explicit physical context.
\end{abstract}

\section{Introduction}

Pretrained diffusion models are increasingly attractive as priors for scientific sampling \cite{song2020score,ho2020denoising,croitoru2023diffusion}. In many applications, however, a learned prior describes only a selected subset of the degrees of freedom rather than the full-system state. This occurs when the model is trained on the coordinates of an isolated subsystem, such as a protein backbone or a molecular fragment, so that environmental and inter-component effects become implicit. Yet it is often desirable to account for interactions beyond those known during training. Solvents, ions, external fields, and other subsystems represented by separate pretrained priors can all substantially reshape the equilibrium ensemble and give rise to collective behavior absent from any individual prior \cite{baldwin1996hofmeister,bekard2014electric,wicky2017affinity,huihui2018modulating,guo2024dynamics}. A target distribution will generally not be correctly sampled when not matched by the prior training distribution \cite{saldano2022impact,hu2024patch,gottschling2025troublesome}.

While one can improve transferability of generative models by including all relevant variables during training and/or finetuning them on context-labeled data \cite{guo2024diffusion}, these strategies are only feasible when the uses of the models can be anticipated during training and appropriate data are available. Consequently there is much interest in inference-time approaches such as guidance \cite{dhariwal2021diffusion,ho2022classifier}, posterior sampling \cite{song2021solving,chung2022diffusion}, or MCMC-based correction \cite{song2019generative,doucet2022score,wu2023practical,xun2025posterior}. These approaches can be viewed as posterior sampling under a learned diffusion prior. However, because they are typically formulated in terms of the variables of the generative model, the added contexts must be expressed in terms of those variables. In scientific settings, degrees of freedom that are not represented explicitly by the prior must be marginalized into an effective (free) energy term \cite{chipot2007free}, which is often intractable for high-dimensional environments and redundant when other subsystems are already well modeled either by pretrained priors over their own degrees of freedom or existing force fields \cite{hornak2006comparison,lindorff2010improved,pall2020heterogeneous,eastman2023openmm}. Thus the challenge we consider is not conditional generation under a single prior but inference-time composition of partial learned priors with explicit system-level context.

We address this challenge with \textbf{Generative Gibbs for Physics-Aware Sampling (GG-PA)}, an approach for composing multiple pretrained diffusion priors with explicit physical context during inference. GG-PA maintains an explicit representation of the full system and couples it to the pretrained priors through projections onto the prior variables. Sampling alternates between two types of updates. In a \emph{parallel denoising step}, each prior updates its variables conditioned on the current full-system state. In a \emph{context-aware aggregation step}, the full-system state updates under the explicit physical context and soft restraints to the prior variables. This mixed-modeling formulation allows diffusion priors to govern only selected subsets of variables, supports modular composition of multiple priors, and leaves the remaining degrees of freedom explicit throughout sampling.

GG-PA defines a family of joint distributions of the prior and full-system variables indexed by diffusion time $t$. Here, $t$ denotes the forward-diffusion noise level; that is, larger $t$ corresponds to addition of more noise. In the limit $t \to 0$ without noise, we show that we recover the desired composed distribution, establishing asymptotic correctness under the stated factorization assumptions. We further show that GG-PA remains exact at finite diffusion times when the interactions between pretrained and/or physical models are quadratic. This result clarifies the practical role of diffusion time. Small $t$ enforces strong consistency with the pretrained priors but can make the system stiff and lead to poor mixing. Larger $t$ relaxes the coupling and improves exploration, but it weakens fidelity to the priors. To balance these effects, we introduce replica exchange over diffusion time \cite{swendsen1986replica,sugita1999replica,shirts2008statistically,he2026crepe}, which admits a tractable swap criterion for the joint target and allows high-noise replicas to explore rapidly while allowing low-noise replicas to preserve fidelity.

We evaluate GG-PA on three systems of increasing complexity. A two-dimensional (2D) double-well system provides a numerically tractable and easily visualizable benchmark that we use to analyze the effect of the diffusion time $t$ and, in turn, demonstrate the mixing benefits of replica exchange. A Ginzburg-Landau $\phi^4$ model on a 2D lattice tests whether copies of a pretrained prior can be composed at sampling time to recover collective many-body behavior absent from the training distribution \cite{milchev1986finite}. We then probe atomistic molecular models, where the context is supplied by molecular dynamics \cite{tobias1992conformational}. Across these cases, GG-PA recovers context-induced distribution shifts, enables zero-shot composition from partial priors, and remains effective in atomistic systems.

Our contributions are threefold. First, we formulate composition of partial diffusion priors with physical context as inference over an explicit full-system state and the variables associated with the pretrained priors, bypassing the intractable marginalization required by standard posterior sampling. Second, we derive the GG-PA sampler and characterize its correctness through asymptotic recovery as $t \to 0$, exact finite-$t$ sampling in settings with quadratic interactions, and replica exchange over diffusion time for improved mixing. 
As a special case, the finite-$t$ result yields a covariance correction for existing split Gibbs samplers. 
Third, we demonstrate modular multi-prior composition numerically in systems with and without quadratic interactions.

\section{Related Work}

\subsection{Inference-time conditioning and composition for diffusion priors}
A growing body of work adapts pretrained diffusion priors at inference time through guidance \cite{song2020score,ho2022classifier,chung2022diffusion,liu2025exendiff,chenming2026}, score composition \cite{liu2022compositional}, posterior sampling \cite{song2021solving,chung2022diffusion}, and Monte Carlo corrections \cite{doucet2022score,wu2023practical,xun2025posterior,du2023reduce}. These methods typically express the new task as a posterior or composition over the variables of the prior(s), with additional information entering as a tractable likelihood or energy on those variables. By contrast, in GG-PA, pretrained priors cover only selected degrees of freedom, and additional variables and interactions remain explicit. Rather than requiring all context to be expressed through the prior variables, GG-PA composes one or more partial learned priors with explicit full-system context through a joint target on an augmented state space.

\subsection{Bayesian plug-and-play and split samplers}
At an algorithmic level, GG-PA is close to plug-and-play (PnP) and variable splitting methods that alternate between prior-driven and likelihood-dependent updates \cite{venkatakrishnan2013plug,chan2016plug,zhang2021plug,kamilov2023plug,buzzard2018plug,laumont2022bayesian,bouman2023generative,coeurdoux2024plug,wu2024principled}. The distinction is the target of inference. Bayesian PnP methods introduce an auxiliary variable associated with the likelihood to separate it from the prior. GG-PA instead targets a joint distribution over an explicit full-system state and the variables of pretrained priors, so the augmented state space is part of the composition rather than only a splitting device.

\section{Method}

\subsection{Augmented State Space and Joint Target}

We consider an explicit full-system state $\mathbf{s}\in\mathcal{S}$, such as an all-atom system including solvent. This state is weighted by a desired context factor $\pi_{\mathrm{ctx}}(\mathbf{s})$, and its projections are coupled to $K$ pretrained diffusion priors. The context factor is an evaluable, possibly unnormalized probabilistic factor over $\mathbf{s}$. In physical applications, it is typically a Boltzmann factor derived from an energy function.

For each $i=1,\dots,K$, let $\mathbf{x}_i\in\mathcal{X}_i$ denote the variable modeled by diffusion prior $i$ (e.g., a protein backbone or substructure), with implicit prior $p_i(\mathbf{x}_i)$ and training forward diffusion kernel $q_t^{(i)}(\mathbf{y}\mid \mathbf{x}_i)$, where $t\in[0,1]$ is the diffusion time. 

Each prior is connected to the explicit full-system state through a projection $\Phi_i:\mathcal{S}\to\mathcal{X}_i$. Crucially, the collection of projections need not represent all degrees of freedom of $\mathbf{s}$. Any variables not represented by the pretrained priors remain explicit and are governed directly by the context factor. 

The augmented state space is defined as $\mathcal{Z}=\mathcal{S}\times\prod_{i=1}^K\mathcal{X}_i$, and we define a family of joint target densities indexed by diffusion time $t$ in this space:
\begin{equation}
    \pi_t\big(\mathbf{s},\{\mathbf{x}_i\}\big)
    \propto
    q_{\mathrm{ctx}}(\mathbf{s},t)
    \prod_{i=1}^K
    \left[
        p_i(\mathbf{x}_i)
        \cdot
        q_t^{(i)}\big(\Phi_i(\mathbf{s})\mid \mathbf{x}_i\big)
    \right],
    \label{eq:joint_distribution_multi}
\end{equation}
where $q_{\mathrm{ctx}}(\mathbf{s},t)$ is the finite-$t$ context factor used in the joint target. The collection $\{q_{\mathrm{ctx}}(\cdot,t)\}_{t\in[0,1]}$ defines a context schedule satisfying $q_{\mathrm{ctx}}(\mathbf{s},0)=\pi_{\mathrm{ctx}}(\mathbf{s})$. Unless otherwise noted, we use the constant schedule $q_{\mathrm{ctx}}(\mathbf{s},t)=\pi_{\mathrm{ctx}}(\mathbf{s})$. We refer to nontrivial $t$-dependent choices as annealed context schedules, which are introduced only in cases that admit exact finite-$t$ behavior. The forward diffusion kernels act as couplings enforcing consistency between the explicit full-system state and the variables of the priors. Following the standard setup in diffusion models \cite{song2020score,ho2020denoising}, as $t\to 1$, these couplings broaden and the prior variables decouple from $\mathbf{s}$, whereas as $t\to 0$,
\begin{equation}
    \begin{aligned}
    \lim_{t\rightarrow 0} q_t^{(i)}\big(\Phi_i(\mathbf{s})\mid \mathbf{x}_i\big)
    &= \delta\!\big(\Phi_i(\mathbf{s})-\mathbf{x}_i\big),
    \\
    \pi_0\big(\mathbf{s},\{\mathbf{x}_i\}\big)
    \propto
    \pi_{\mathrm{ctx}}(\mathbf{s})
    &\prod_{i=1}^K
    \left[
        p_i(\mathbf{x}_i)\,
        \delta\!\big(\Phi_i(\mathbf{s})-\mathbf{x}_i\big)
    \right],
    \label{eq:t0_limit}
    \end{aligned}
\end{equation}
and marginalizing $\{\mathbf{x}_i\}$ yields $\pi_0(\mathbf{s})\propto \pi_{\mathrm{ctx}}(\mathbf{s}) \prod_{i=1}^K p_i\!\big(\Phi_i(\mathbf{s})\big)$. In this limit, the full-system state is strictly consistent with the high-density regions of the priors while the remaining variables are weighted by the desired context factor.

\subsection{Generative Gibbs Sampling Framework}

\begin{algorithm}[t]
\caption{Generative Gibbs for Physics-Aware Sampling (GG-PA)}
\label{alg:ggpa}
\begin{algorithmic}[1]
\REQUIRE Context schedule $q_{\mathrm{ctx}}(\cdot,t)$, pretrained diffusion priors $\{p_i\}_{i=1}^K$, diffusion time $t$, sweeps $N$
\STATE Initialize full-system state $\mathbf{s}^{(0)}$
\FOR{$n = 1, 2, \dots, N$}
    \FOR{$i = 1$ \TO $K$ \textbf{in parallel}}
        \STATE Sample $\mathbf{x}_i^{(n)} \sim p_i\!\left(\mathbf{x}_i \mid \Phi_i(\mathbf{s}^{(n-1)}), t\right)$
    \ENDFOR
    \STATE Sample $\mathbf{s}^{(n)}$ under $U_{\mathrm{eff}}(\mathbf{s}; \{\mathbf{x}_i^{(n)}\}, t)$
\ENDFOR
\RETURN $\{\mathbf{s}^{(n)}\}_{n=1}^N$
\end{algorithmic}
\end{algorithm}

To sample from the joint target in Eq.~\eqref{eq:joint_distribution_multi}, we employ an alternating Gibbs sampling scheme \cite{gelfand2000gibbs} that updates each variable based on its conditional distribution.

\paragraph{Step 1: Parallel Denoising Update.}
With $\mathbf{s}$ fixed, each prior variable $\mathbf{x}_i$ is updated independently by sampling from
\begin{equation}
    \pi_t(\mathbf{x}_i \mid \mathbf{s})
    \propto
    p_i(\mathbf{x}_i)\;
    q_t^{(i)}\!\big(\Phi_i(\mathbf{s}) \mid \mathbf{x}_i\big).
    \label{eq:denoising_multi}
\end{equation}
Treating $\mathbf{y}_i := \Phi_i(\mathbf{s})$ as a corrupted observation at diffusion level $t$, Eq.~\eqref{eq:denoising_multi} is the Bayesian denoising posterior induced by the prior $p_i$ and the forward diffusion kernel $q_t^{(i)}$ (see Appendix A). In practice, we run the pretrained model $i$'s reverse-time sampler initialized at $\mathbf{y}_i$ and integrate from time $t$ down to $0$ to obtain a sample $\mathbf{x}_i$ to approximate this conditional Gibbs update. The $K$ denoising updates across different priors can be performed in parallel.

\paragraph{Step 2: Context-Aware Aggregation Update.}
Fixing $\{\mathbf{x}_i\}$, we update the full-system state $\mathbf{s}$ from
\begin{equation}
  \pi_t(\mathbf{s} \mid \{\mathbf{x}_i\})
\propto
q_{\mathrm{ctx}}(\mathbf{s},t)\,
\prod_{i=1}^K q_t^{(i)}\!\big(\Phi_i(\mathbf{s}) \mid \mathbf{x}_i\big),  
\end{equation}
or equivalently under the effective potential
\begin{equation}
    U_{\mathrm{eff}}(\mathbf{s}; \{\mathbf{x}_i\}, t)
    =
    -\log q_{\mathrm{ctx}}(\mathbf{s},t)
    \;-\;
    \sum_{i=1}^K \log q_t^{(i)}\!\big(\Phi_i(\mathbf{s}) \mid \mathbf{x}_i\big),
    \label{eq:effective_energy_multi}
\end{equation}
so that $\mathbf{s}\sim\exp(-U_{\mathrm{eff}}(\mathbf{s}))$. Depending on the analytical nature of $q_{\mathrm{ctx}}$ and the projector $\Phi_i$, this step can be executed exactly within tractable families (e.g., linear-Gaussian) or numerically via standard MCMC methods such as Metropolis-Hastings \cite{metropolis1953equation} or Langevin dynamics. 

\subsection{Theoretical Properties}

We next show that our framework is asymptotically exact for decomposable systems as $t\to 0$.
\begin{proposition}[Asymptotic Exactness for Decomposable Systems]
\label{prop:asymptotic_exactness}
Let the full-system state be decomposable as $\mathbf{s}=(\mathbf{s}_1,\dots,\mathbf{s}_K,\mathbf{s}_{\mathrm{env}})$, with true Boltzmann density $\pi(\mathbf{s})\propto \exp\!\left(-\beta V_{\mathrm{tot}}(\mathbf{s})\right)$ and total energy
\begin{equation}
    V_{\mathrm{tot}}(\mathbf{s})
    =
    \sum_{i=1}^K U_i(\mathbf{s}_i)
    +
    U_{\mathrm{env}}(\mathbf{s}_{\mathrm{env}})
    +
    U_{\mathrm{int}}(\mathbf{s}_1,\dots,\mathbf{s}_K,\mathbf{s}_{\mathrm{env}}).
    \label{eq:total_energy_decomposable_multi}
\end{equation}
Assume perfect diffusion priors for the isolated components, $p_i(\mathbf{x}_i)\propto \exp\!\left(-\beta U_i(\mathbf{x}_i)\right)$, projectors $\Phi_i(\mathbf{s})=\mathbf{s}_i$, and a context factor of the form
\begin{equation}
\pi_{\mathrm{ctx}}(\mathbf{s})
\propto
\exp\!\left(-\beta\left[
U_{\mathrm{env}}(\mathbf{s}_{\mathrm{env}})
+
U_{\mathrm{int}}(\mathbf{s}_1,\dots,\mathbf{s}_K,\mathbf{s}_{\mathrm{env}})
\right]\right).
\end{equation}
In other words, the component priors and the context factor define a non-overlapping decomposition of the total energy: the priors account only for the isolated component terms $U_i$, while $\pi_{\mathrm{ctx}}$ contributes only the remaining environmental and interaction terms.

Then, in the limit $t \to 0$, the marginal target exactly recovers the true physical distribution:
\begin{equation}
    \pi_0\big(\{\mathbf{x}_i\},\mathbf{s}_{\mathrm{env}}\big)
    \propto
    \exp\!\left(-\beta V_{\mathrm{tot}}(\mathbf{x}_1,\dots,\mathbf{x}_K,\mathbf{s}_{\mathrm{env}})\right).
    \label{eq:asymptotic_target}
\end{equation}
\end{proposition}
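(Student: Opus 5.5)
The plan is to start from the $t\to 0$ form of the joint target in Eq.~\eqref{eq:t0_limit}, substitute the stated decomposition, and then marginalize over the component coordinates $\mathbf{s}_1,\dots,\mathbf{s}_K$ using the delta functions. Since $\Phi_i(\mathbf{s})=\mathbf{s}_i$, each coupling factor becomes $\delta(\mathbf{s}_i-\mathbf{x}_i)$. The limiting joint density over $(\mathbf{s}_1,\dots,\mathbf{s}_K,\mathbf{s}_{\mathrm{env}},\{\mathbf{x}_i\})$ is then
\begin{equation*}
\pi_0 \propto \exp\!\left(-\beta\left[U_{\mathrm{env}}(\mathbf{s}_{\mathrm{env}})+U_{\mathrm{int}}(\mathbf{s}_1,\dots,\mathbf{s}_K,\mathbf{s}_{\mathrm{env}})\right]\right)\prod_{i=1}^K e^{-\beta U_i(\mathbf{x}_i)}\,\delta(\mathbf{s}_i-\mathbf{x}_i).
\end{equation*}
Here I use the constant context schedule $q_{\mathrm{ctx}}(\cdot,0)=\pi_{\mathrm{ctx}}$ and the perfect-prior assumption $p_i\propto e^{-\beta U_i}$.

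Next I integrate out each $\mathbf{s}_i$ in turn. The sifting property of $\delta(\mathbf{s}_i-\mathbf{x}_i)$ replaces every occurrence of $\mathbf{s}_i$ in $U_{\mathrm{int}}$ by $\mathbf{x}_i$. What remains is a density over $(\{\mathbf{x}_i\},\mathbf{s}_{\mathrm{env}})$ proportional to
\begin{equation*}
\exp\!\left(-\beta\left[\sum_i U_i(\mathbf{x}_i)+U_{\mathrm{env}}(\mathbf{s}_{\mathrm{env}})+U_{\mathrm{int}}(\mathbf{x}_1,\dots,\mathbf{x}_K,\mathbf{s}_{\mathrm{env}})\right]\right),
\end{equation*}
which is exactly $\exp(-\beta V_{\mathrm{tot}}(\mathbf{x}_1,\dots,\mathbf{x}_K,\mathbf{s}_{\mathrm{env}}))$ by Eq.~\eqref{eq:total_energy_decomposable_multi}. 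The non-overlapping decomposition is what guarantees that no energy term is counted twice or omitted. Proportionality constants (normalizers of $p_i$ and $\pi_{\mathrm{ctx}}$) can be absorbed throughout. Equivalently, one can marginalize $\{\mathbf{x}_i\}$ instead, giving $\pi_0(\mathbf{s})\propto\pi_{\mathrm{ctx}}(\mathbf{s})\prod_i p_i(\mathbf{s}_i)$ as already noted after Eq.~\eqref{eq:t0_limit}, and then rename variables.

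The main obstacle is making the limit $t\to0$ rigorous rather than formal, i.e., justifying that the $t$-marginal converges to the expression obtained by plugging in the delta. For this step I would write the finite-$t$ marginal as
\begin{equation*}
\pi_t(\{\mathbf{x}_i\},\mathbf{s}_{\mathrm{env}})\propto\prod_i p_i(\mathbf{x}_i)\int \pi_{\mathrm{ctx}}(\mathbf{s})\prod_i q_t^{(i)}(\mathbf{s}_i\mid\mathbf{x}_i)\,d\mathbf{s}_1\cdots d\mathbf{s}_K,
\end{equation*}
and argue that $q_t^{(i)}(\cdot\mid\mathbf{x}_i)$ is an approximate identity (e.g., Gaussian with variance $\sigma_t^2\to0$, mean scaling $\alpha_t\to1$). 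Assuming $e^{-\beta U_{\mathrm{int}}}$ is bounded and continuous, the inner integral then converges pointwise to $\pi_{\mathrm{ctx}}(\mathbf{x}_1,\dots,\mathbf{x}_K,\mathbf{s}_{\mathrm{env}})$, and dominated convergence handles the normalizing constant. I would state these mild regularity assumptions explicitly and otherwise treat the delta limit as given in Eq.~\eqref{eq:t0_limit}.
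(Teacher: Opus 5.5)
Your proposal is correct and follows the same route as the paper: substitute the perfect priors and the context factor into the $t\to0$ joint target of Eq.~\eqref{eq:t0_limit}, use $\Phi_i(\mathbf{s})=\mathbf{s}_i$ to get the factors $\delta(\mathbf{s}_i-\mathbf{x}_i)$, and integrate out $\{\mathbf{s}_i\}$ to recover $\exp(-\beta V_{\mathrm{tot}})$. Your approximate-identity and dominated-convergence argument for the limit goes beyond the paper, which takes the delta limit as given, and it is a sound way to make that step rigorous under the regularity assumptions you state (for example, bounded continuous $e^{-\beta U_{\mathrm{int}}}$ and integrable $e^{-\beta U_{\mathrm{env}}}$).
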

\begin{proof}
Substituting the chosen priors and context into Eq.~\eqref{eq:t0_limit} with $\Phi_i(\mathbf{s})=\mathbf{s}_i$ yields a factor $\delta(\mathbf{s}_i-\mathbf{x}_i)$ for all $i$ in the $t\to0$ limit. Marginalizing $\{\mathbf{s}_i\}_{i=1}^K$ then recovers the Boltzmann factor in Eq.~\eqref{eq:asymptotic_target}.
\end{proof}

At finite diffusion time, the coupling kernels no longer enforce exact consistency between $\Phi_i(\mathbf{s})$ and $\mathbf{x}_i$, so the marginal target generally differs from the desired target. However, exact recovery remains possible at finite $t$ when the interactions are quadratic.

\begin{proposition}[Finite-Time Exactness via Gaussian Deconvolution]
\label{prop:finite_t_exactness}
Let $\mathbf{s}_{\mathrm{sys}}=(\mathbf{s}_1,\dots,\mathbf{s}_K)$ and $\mathbf{x}=(\mathbf{x}_1,\dots,\mathbf{x}_K)$. Suppose the interaction energy is quadratic in $\mathbf{s}_{\mathrm{sys}}$ given $\mathbf{s}_{\mathrm{env}}$, so that
\begin{equation}
\exp\!\big(-\beta U_{\mathrm{int}}(\mathbf{s}_{\mathrm{sys}},\mathbf{s}_{\mathrm{env}})\big)
\propto
\mathcal{N}\!\big(\mathbf{s}_{\mathrm{sys}}
\mid
\boldsymbol{\mu}_c(\mathbf{s}_{\mathrm{env}}),
\boldsymbol{\Sigma}_c(\mathbf{s}_{\mathrm{env}})
\big).
\label{eq:int_gaussian_cond_env}
\end{equation}
Assume further that the product of independent forward kernels is linear-Gaussian,
\begin{equation}
\prod_{i=1}^K q_t^{(i)}(\mathbf{s}_i\mid \mathbf{x}_i)
=
\mathcal{N}\!\big(\mathbf{s}_{\mathrm{sys}}\mid \mathbf{A}_t\mathbf{x},\boldsymbol{\Sigma}_t\big),
\end{equation}
and parameterize the interaction part of the finite-$t$ context schedule in the same family:
\begin{equation}
q_{\mathrm{int}}(\mathbf{s}_{\mathrm{sys}}\mid \mathbf{s}_{\mathrm{env}},t)
=
\mathcal{N}\!\big(\mathbf{s}_{\mathrm{sys}}
\mid
\boldsymbol{\mu}_q(\mathbf{s}_{\mathrm{env}},t),
\boldsymbol{\Sigma}_q(\mathbf{s}_{\mathrm{env}},t)
\big).
\label{eq:qint_gaussian_family}
\end{equation}
Then the finite-time marginal remains exactly uncorrupted,
\begin{equation}
\pi_t(\mathbf{x},\mathbf{s}_{\mathrm{env}})
\propto
\pi_0(\mathbf{x},\mathbf{s}_{\mathrm{env}}),
\end{equation}
if and only if the context moments satisfy
\begin{equation}
\begin{aligned}
\boldsymbol{\mu}_q(\mathbf{s}_{\mathrm{env}},t) &= \mathbf{A}_t \boldsymbol{\mu}_c(\mathbf{s}_{\mathrm{env}}), \\
\boldsymbol{\Sigma}_q(\mathbf{s}_{\mathrm{env}},t) &= \mathbf{A}_t\,\boldsymbol{\Sigma}_c(\mathbf{s}_{\mathrm{env}})\,\mathbf{A}_t^T - \boldsymbol{\Sigma}_t.
\end{aligned}
\label{eq:mixed_gaussian_matching}
\end{equation}
\end{proposition}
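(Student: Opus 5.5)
We compute the finite-$t$ marginal $\pi_t(\mathbf{x},\mathbf{s}_{\mathrm{env}})$ by integrating out $\mathbf{s}_{\mathrm{sys}}$ from the joint target in Eq.~\eqref{eq:joint_distribution_multi}, and compare it with the $t=0$ marginal from Proposition~\ref{prop:asymptotic_exactness}. With the context schedule split as $q_{\mathrm{ctx}}(\mathbf{s},t)=e^{-\beta U_{\mathrm{env}}(\mathbf{s}_{\mathrm{env}})}\,q_{\mathrm{int}}(\mathbf{s}_{\mathrm{sys}}\mid\mathbf{s}_{\mathrm{env}},t)$, we have
\begin{equation*}
\pi_t(\mathbf{x},\mathbf{s}_{\mathrm{env}})\propto e^{-\beta U_{\mathrm{env}}}\prod_i p_i(\mathbf{x}_i)\int \mathcal{N}\big(\mathbf{s}_{\mathrm{sys}}\mid\boldsymbol{\mu}_q,\boldsymbol{\Sigma}_q\big)\,\mathcal{N}\big(\mathbf{s}_{\mathrm{sys}}\mid\mathbf{A}_t\mathbf{x},\boldsymbol{\Sigma}_t\big)\,d\mathbf{s}_{\mathrm{sys}}.
\end{equation*}
The Gaussian convolution identity evaluates the integral as $\mathcal{N}(\mathbf{A}_t\mathbf{x}\mid\boldsymbol{\mu}_q,\boldsymbol{\Sigma}_q+\boldsymbol{\Sigma}_t)$. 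Meanwhile, $\pi_0(\mathbf{x},\mathbf{s}_{\mathrm{env}})\propto e^{-\beta U_{\mathrm{env}}}\prod_i p_i(\mathbf{x}_i)\,\mathcal{N}(\mathbf{x}\mid\boldsymbol{\mu}_c,\boldsymbol{\Sigma}_c)$ by Eq.~\eqref{eq:asymptotic_target} and Eq.~\eqref{eq:int_gaussian_cond_env}. Exactness is therefore equivalent to
\begin{equation*}
\mathcal{N}\big(\mathbf{A}_t\mathbf{x}\mid\boldsymbol{\mu}_q,\boldsymbol{\Sigma}_q+\boldsymbol{\Sigma}_t\big)\propto\mathcal{N}\big(\mathbf{x}\mid\boldsymbol{\mu}_c,\boldsymbol{\Sigma}_c\big)
\end{equation*}
as functions of $\mathbf{x}$, for each $\mathbf{s}_{\mathrm{env}}$. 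The proportionality constant may depend on $\mathbf{s}_{\mathrm{env}}$ only through factors common to both sides; I will address this point below.

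For sufficiency, substitute Eq.~\eqref{eq:mixed_gaussian_matching}. The left side becomes $\mathcal{N}(\mathbf{A}_t\mathbf{x}\mid\mathbf{A}_t\boldsymbol{\mu}_c,\mathbf{A}_t\boldsymbol{\Sigma}_c\mathbf{A}_t^T)$. Its exponent is
\begin{equation*}
-\tfrac12(\mathbf{x}-\boldsymbol{\mu}_c)^T\mathbf{A}_t^T(\mathbf{A}_t\boldsymbol{\Sigma}_c\mathbf{A}_t^T)^{-1}\mathbf{A}_t(\mathbf{x}-\boldsymbol{\mu}_c),
\end{equation*}
which reduces to $-\tfrac12(\mathbf{x}-\boldsymbol{\mu}_c)^T\boldsymbol{\Sigma}_c^{-1}(\mathbf{x}-\boldsymbol{\mu}_c)$ when $\mathbf{A}_t$ is invertible. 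This holds in the standard diffusion setting, where $\mathbf{A}_t=\alpha_t\mathbf{I}$ or is block-diagonal with nonzero scalings, and I will state this assumption explicitly. The determinant prefactors differ by $|\det\mathbf{A}_t|$, which does not depend on $\mathbf{s}_{\mathrm{env}}$. The prefactor $|\mathbf{A}_t\boldsymbol{\Sigma}_c\mathbf{A}_t^T|^{-1/2}$ matches $|\boldsymbol{\Sigma}_c|^{-1/2}$ up to that constant. Since both sides are normalized Gaussians in $\mathbf{s}_{\mathrm{sys}}$ coordinates, the $\mathbf{s}_{\mathrm{env}}$-dependent normalizations agree and the joint marginals are proportional.

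For necessity, fix $\mathbf{s}_{\mathrm{env}}$ and require the two Gaussian functions of $\mathbf{x}$ to be proportional. Matching the quadratic forms gives $\mathbf{A}_t^T(\boldsymbol{\Sigma}_q+\boldsymbol{\Sigma}_t)^{-1}\mathbf{A}_t=\boldsymbol{\Sigma}_c^{-1}$, so $\boldsymbol{\Sigma}_q=\mathbf{A}_t\boldsymbol{\Sigma}_c\mathbf{A}_t^T-\boldsymbol{\Sigma}_t$ by invertibility. Matching the linear terms gives $\mathbf{A}_t^T(\boldsymbol{\Sigma}_q+\boldsymbol{\Sigma}_t)^{-1}\boldsymbol{\mu}_q=\boldsymbol{\Sigma}_c^{-1}\boldsymbol{\mu}_c$, hence $\boldsymbol{\mu}_q=\mathbf{A}_t\boldsymbol{\mu}_c$.

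The main obstacle is this necessity direction. Proportionality of the joint marginals in $(\mathbf{x},\mathbf{s}_{\mathrm{env}})$ must be reduced to proportionality in $\mathbf{x}$ alone for each fixed $\mathbf{s}_{\mathrm{env}}$. That reduction also needs the products $\prod_i p_i$ to be nonvanishing on an open set, so that we can divide by them. The same proportionality identifies the quadratic and linear coefficients but leaves an $\mathbf{s}_{\mathrm{env}}$-dependent constant free. I would argue that this constant is irrelevant to the moment conditions, since it only affects normalization, and that the joint proportionality forces it to be constant anyway.

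Finally, I will remark that the condition is only realizable when $\mathbf{A}_t\boldsymbol{\Sigma}_c\mathbf{A}_t^T-\boldsymbol{\Sigma}_t\succ0$, meaning the interaction must be broader than the diffusion noise. Otherwise no valid $q_{\mathrm{int}}$ exists, and the ``if and only if'' should be read within the Gaussian family of Eq.~\eqref{eq:qint_gaussian_family}.
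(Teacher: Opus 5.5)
Your proposal is correct and follows essentially the same route as the paper's appendix proof. Both factor $q_{\mathrm{ctx}}$ so that only the interaction term is annealed, integrate out $\mathbf{s}_{\mathrm{sys}}$ with the Gaussian convolution identity, and then use invertibility of $\mathbf{A}_t$ (which the appendix also assumes) to match the result to $\mathcal{N}(\mathbf{x}\mid\boldsymbol{\mu}_c,\boldsymbol{\Sigma}_c)$; the paper pulls the Gaussian back through $\mathbf{A}_t^{-1}$ and equates mean and covariance, whereas you equate quadratic and linear coefficients, which is equivalent. Your extra bookkeeping is a slight tightening of the paper's argument: the constant comes out as $|\det\mathbf{A}_t|^{-1}$, so the proportionality is joint rather than merely conditional, and you note that $\prod_i p_i>0$ is needed on an open set. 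This is valid under the same implicit reading the paper uses, namely that the constant in the quadratic hypothesis does not depend on $\mathbf{s}_{\mathrm{env}}$.
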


\begin{proof}[Proof Sketch]
Marginalizing $\mathbf{s}_{\mathrm{sys}}$ in the joint target yields the convolution of $q_{\mathrm{int}}(\mathbf{s}_{\mathrm{sys}}\mid \mathbf{s}_{\mathrm{env}},t)$ with $\mathcal{N}\!\big(\mathbf{s}_{\mathrm{sys}}\mid \mathbf{A}_t\mathbf{x},\boldsymbol{\Sigma}_t\big)$. Applying the Gaussian product-and-integral identity and matching the resulting mean and covariance to Eq.~\eqref{eq:int_gaussian_cond_env} gives Eq.~\eqref{eq:mixed_gaussian_matching}; full details are provided in Appendix A.
\end{proof}

\begin{corollary}[Critical Diffusion Time Bound]
\label{cor:time_bound}
The construction in Proposition~\ref{prop:finite_t_exactness} is well-posed if and only if $\boldsymbol{\Sigma}_q$ is positive-semidefinite, equivalently
\begin{equation}
\boldsymbol{\Sigma}_t \preceq \mathbf{A}_t\,\boldsymbol{\Sigma}_c(\mathbf{s}_{\mathrm{env}})\,\mathbf{A}_t^T,
\label{eq:mixed_positive_condition}
\end{equation}
where $\preceq$ denotes positive-semidefinite ordering. This defines a maximum admissible diffusion time $t_{\max}$ beyond which exact finite-$t$ recovery is impossible. Intuitively, exact recovery is physically feasible only when the generative noise injected by the forward kernel ($\boldsymbol{\Sigma}_t$) does not exceed the intrinsic thermal uncertainty encoded in the quadratic interaction ($\boldsymbol{\Sigma}_c$).
\end{corollary}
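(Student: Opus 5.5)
The plan is to read the corollary off the ``if and only if'' characterization in Proposition~\ref{prop:finite_t_exactness}. Exact finite-$t$ recovery within the Gaussian family of Eq.~\eqref{eq:qint_gaussian_family} forces $\boldsymbol{\Sigma}_q(\mathbf{s}_{\mathrm{env}},t)=\mathbf{A}_t\boldsymbol{\Sigma}_c\mathbf{A}_t^T-\boldsymbol{\Sigma}_t$. The factor $q_{\mathrm{int}}$ is a legitimate (possibly degenerate) Gaussian density precisely when this matrix is symmetric positive-semidefinite. Symmetry is automatic, since both terms are symmetric. So well-posedness is equivalent to $\mathbf{A}_t\boldsymbol{\Sigma}_c\mathbf{A}_t^T-\boldsymbol{\Sigma}_t\succeq 0$, which is exactly Eq.~\eqref{eq:mixed_positive_condition}.

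For the converse direction I would argue as follows.
\begin{itemize}
\item If the difference has a negative eigenvalue along some direction $\mathbf{v}$, then no nonnegative integrable factor with the required moments exists.
\item Were $q_{\mathrm{int}}$ a valid distribution with covariance $\boldsymbol{\Sigma}_q$, then the convolution with $\mathcal{N}(\cdot\mid\mathbf{A}_t\mathbf{x},\boldsymbol{\Sigma}_t)$, viewed as the law of a sum of independent variables, would have variance along $\mathbf{v}$ at least $\mathbf{v}^T\boldsymbol{\Sigma}_t\mathbf{v}$.
\item This contradicts the target variance $\mathbf{v}^T\mathbf{A}_t\boldsymbol{\Sigma}_c\mathbf{A}_t^T\mathbf{v}$, so the deconvolution is impossible.
\end{itemize}
This uses only additivity of covariances under convolution, and it does not even need the Gaussian ansatz.

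For the $t_{\max}$ statement, I would use the standard forward-kernel schedule, e.g.\ $\mathbf{A}_t=\alpha_t\mathbf{I}$ and $\boldsymbol{\Sigma}_t=\sigma_t^2\mathbf{I}$. The condition then becomes $\sigma_t^2/\alpha_t^2\le\lambda_{\min}(\boldsymbol{\Sigma}_c(\mathbf{s}_{\mathrm{env}}))$. The signal-to-noise ratio $\alpha_t^2/\sigma_t^2$ is monotone decreasing in $t$, with $\sigma_0=0$. Hence the admissible set is an interval $[0,t_{\max}]$, where $t_{\max}$ solves $\sigma_t^2/\alpha_t^2=\lambda_{\min}(\boldsymbol{\Sigma}_c)$. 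Taking the infimum over $\mathbf{s}_{\mathrm{env}}$ gives a uniform bound.

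The main obstacle is the general matrix case, where the interval structure depends on how $\mathbf{A}_t$ and $\boldsymbol{\Sigma}_t$ vary with $t$. There I would state monotonicity of $\mathbf{A}_t^{-1}\boldsymbol{\Sigma}_t\mathbf{A}_t^{-T}$ in the Loewner order as the assumption under which $t_{\max}$ is well defined.
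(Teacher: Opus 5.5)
Your first paragraph is exactly the paper's proof. The paper substitutes $\boldsymbol{\Sigma}_q=\mathbf{A}_t\boldsymbol{\Sigma}_c\mathbf{A}_t^\top-\boldsymbol{\Sigma}_t$ from Proposition~\ref{prop:finite_t_exactness}, notes that a Gaussian covariance is admissible if and only if it is positive-semidefinite, and then only asserts that $t_{\max}$ exists.

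Your two further steps are correct, go beyond the paper, and strengthen the statement.

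\begin{itemize}
\item \textbf{Necessity without the Gaussian ansatz.} The variance-additivity converse shows the Loewner condition is necessary for \emph{every} nonnegative interaction factor, not only within the Gaussian family the paper's argument covers. Integrability of $q_{\mathrm{int}}$ is automatic: by Tonelli its mass equals that of the required Gaussian. If $\mathbf{v}^\top S$ had infinite variance, the sum would as well, so the inequality holds in either case. Dividing characteristic functions shows more: $q_{\mathrm{int}}$ is forced to be the Gaussian of the proposition, so the ansatz loses no generality. Your route therefore establishes ``exact finite-$t$ recovery is impossible'' beyond the bound. The paper's argument only rules out Gaussian choices of $q_{\mathrm{int}}$.
\item \textbf{The $t_{\max}$ interval.} Your argument supplies the hypothesis the paper tacitly needs for the admissible set to be an interval $[0,t_{\max}]$. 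That hypothesis is that $\alpha_t^2/\sigma_t^2$ decreases, or more generally that $\mathbf{A}_t^{-1}\boldsymbol{\Sigma}_t\mathbf{A}_t^{-\top}$ is nondecreasing in the Loewner order.
\item \textbf{Consistency with the paper's examples.} In the isotropic case your criterion $\sigma_t^2/\alpha_t^2\le\lambda_{\min}(\boldsymbol{\Sigma}_c)$ recovers, up to the boundary case, the paper's explicit bounds. These are $\alpha^2\ge k_c\sigma^2$ for the double well and $\alpha^2\ge 16J\sigma^2$ for $\phi^4$. In the $\phi^4$ case $\boldsymbol{\Sigma}_c$ is improper on the zero mode, and $\lambda_{\min}(\boldsymbol{\Sigma}_c)=1/(16J)$ on the remaining modes.
\end{itemize}

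In short, the paper's version buys brevity, while yours buys an impossibility statement that does not depend on the Gaussian ansatz and a justified $t_{\max}$. One cosmetic point: a Gaussian with singular covariance is a measure rather than a density, which is all the construction needs.
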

\begin{remark}[Connection to split Gibbs sampling for linear inverse problems]
\label{rem:split_gibbs}
When $K=1$, $\Phi(\mathbf{s})=\mathbf{s}$, and the context factor is a linear-Gaussian likelihood $\mathcal{N}(\mathbf{y};\mathbf{H}\mathbf{x},\boldsymbol{\Sigma}_\eta)$ with forward operator $\mathbf{H}$, the framework reduces to the split Gibbs sampler used in plug-and-play posterior sampling \cite{vono2019split,coeurdoux2024plug,wu2024principled}. In this setting, Proposition~\ref{prop:finite_t_exactness} implies that one should choose the split likelihood covariance to be $\boldsymbol{\Sigma}_\eta - \rho^2 \mathbf{H}\mathbf{H}^\top$ rather than $\boldsymbol{\Sigma}_\eta$ to avoid a covariance inflation that  requires $\rho\to 0$ for exactness. Details and a numerical verification are provided in Appendices~\ref{sec:si_linear_inverse} and \ref{sec:si_gmm_verification}.
\end{remark}

\subsection{Diffusion-Time Replica Exchange}
\label{sec:replica_exchange}

By construction, small $t$ yields faithful but stiff targets, whereas large $t$ improves exploration at the cost of weaker prior consistency. To balance this trade-off, we perform replica exchange \cite{swendsen1986replica} across the diffusion-time axis.

We simulate $R$ parallel replicas $\{(\mathbf{s}^{(r)},\{\mathbf{x}_i^{(r)}\}, t_r)\}_{r=1}^R$ with $0<t_1<\dots<t_R<1$ and periodically propose swaps between neighboring replicas $r$ and $r+1$. Substituting the joint target density in Eq.~\eqref{eq:joint_distribution_multi} into the Metropolis-Hastings swap ratio causes the intractable priors $\prod_i p_i(\mathbf{x}_i)$ to cancel exactly, yielding the tractable acceptance rule
\begin{equation}
    \alpha
    =
    \min\!\left(
    1,\;
    \frac{
    q_{\mathrm{ctx}}(\mathbf{s}^{(r)},t_{r+1})\;
    q_{\mathrm{ctx}}(\mathbf{s}^{(r+1)},t_{r})
    \prod_{i=1}^K
    q_{t_{r+1}}^{(i)}\!\big(\Phi_i(\mathbf{s}^{(r)})\mid \mathbf{x}_i^{(r)}\big)\;
    q_{t_{r}}^{(i)}\!\big(\Phi_i(\mathbf{s}^{(r+1)})\mid \mathbf{x}_i^{(r+1)}\big)
    }{
    q_{\mathrm{ctx}}(\mathbf{s}^{(r)},t_{r})\;
    q_{\mathrm{ctx}}(\mathbf{s}^{(r+1)},t_{r+1})
    \prod_{i=1}^K
    q_{t_{r}}^{(i)}\!\big(\Phi_i(\mathbf{s}^{(r)})\mid \mathbf{x}_i^{(r)}\big)\;
    q_{t_{r+1}}^{(i)}\!\big(\Phi_i(\mathbf{s}^{(r+1)})\mid \mathbf{x}_i^{(r+1)}\big)
    }
    \right).
    \label{eq:re_accept_simplified_multi}
\end{equation}
The swap criterion therefore depends only on the forward diffusion kernels and the context schedule. For the constant schedule $q_{\mathrm{ctx}}(\mathbf{s},t)=\pi_{\mathrm{ctx}}(\mathbf{s})$, the context factors cancel from the swap ratio. The detailed algorithm with replica exchange (GG-PA-RE) is provided in Appendix A.

\section{Experiments}
We evaluate GG-PA on three systems of increasing complexity. The coupled double-well provides a visualizable quadratic-interaction benchmark that tests finite-$t$ exactness and the effectiveness of diffusion-time replica exchange. The $\phi^4$ lattice model tests whether the same composition mechanism scales to interacting many-body systems and preserves collective critical behavior. Finally, we test the ability to capture context-induced distribution shifts and zero-shot composition with atomistic models of peptides.

\subsection{Coupled Double-Well System}
\label{sec:double_well}

We first consider a two-dimensional (2D) coupled double-well system with state $\mathbf{x}=(x_{\mathrm{sys}},x_{\mathrm{env}})\in\mathbb{R}^2$ and total potential energy (Fig.~\ref{fig:double_well}a):
\begin{equation}
    U_{\mathrm{tot}}(\mathbf{x}) = A(x_{\mathrm{sys}}^2 - 1)^2 + \frac{1}{2}k_c(x_{\mathrm{sys}} - x_{\mathrm{env}})^2 + \frac{1}{2}k_b(x_{\mathrm{env}} - u_{\mathrm{eq}})^2.
\end{equation}
We train a diffusion prior only on the isolated symmetric distribution of $x_{\mathrm{sys}}$ ($k_c=0$) and then compose it at sampling time with the explicit environmental context ($k_c>0$, $u_{\mathrm{eq}}=1$).
Because the interaction is quadratic and the forward diffusion kernel is Gaussian, this system satisfies the requirements for Proposition \ref{prop:finite_t_exactness} and Corollary~\ref{cor:time_bound}; the theoretical diffusion time bound is $t_{\max}=0.28$ (details in Appendix B). We compare GG-PA with fixed $t$ (Algorithm \ref{alg:ggpa}), its diffusion-time replica-exchange variant (GG-PA-RE; Algorithm \ref{alg:ggpa_re}), and a prior-only baseline that ignores the explicit context (labeled Direct Diffusion). For the mixing analysis, we also report molecular dynamics (MD) governed by the target potential.

\begin{figure}[hbt]
    \centering
    \includegraphics[width=\linewidth]{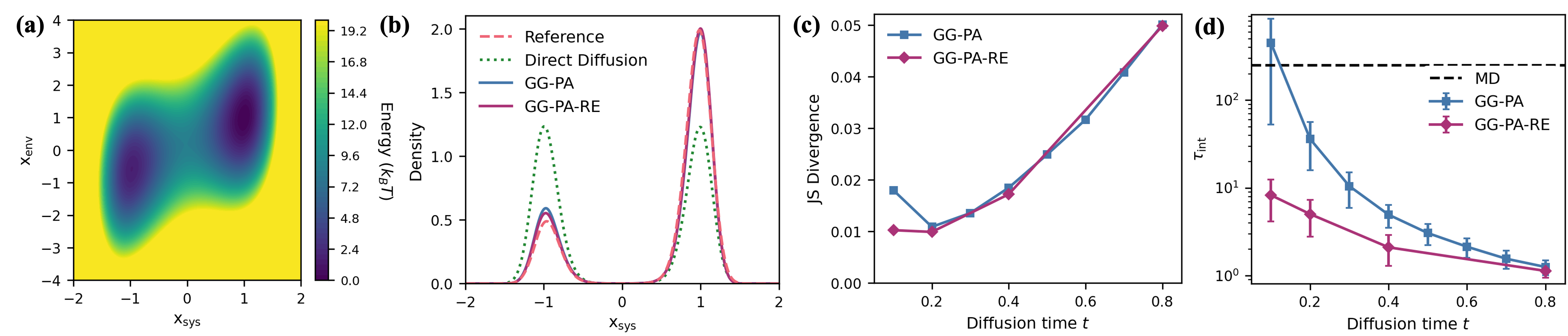}
    \caption{
    \textbf{Coupled double-well system.} 
    (a) The environment induces an asymmetric potential.
    (b) Marginal density of $x_{\mathrm{sys}}$. GG-PA (blue solid) and GG-PA-RE (purple solid) capture the asymmetry despite their symmetric prior (Direct Diffusion; green dotted). 
    (c) Jensen-Shannon divergence versus $t$. Below the diffusion time bound ($t \le 0.28$), GG-PA-RE remains consistently near the minimum observed error, whereas fixed-$t$ GG-PA rebounds at $t=0.1$ due to poor mixing. 
    (d) Integrated autocorrelation time $\tau_{\mathrm{int}}$. Replica exchange significantly accelerates mixing.
    }
    \label{fig:double_well}
\end{figure}
The results in Fig.~\ref{fig:double_well}b confirm that GG-PA successfully recovers the environment-induced asymmetry. We assess the performance as a function of diffusion time $t$ with the Jensen-Shannon (JS) divergence \cite{lin2002divergence} in Fig.~\ref{fig:double_well}c. 
The minimum error observed occurs at $t=0.2$ for both GG-PA and GG-PA-RE; given that the error is the same for the two algorithms, we attribute this error to that in the learned prior. The assumptions of Section \ref{sec:replica_exchange} (e.g., cancellation of $q_\mathrm{ctx}$) appear not to introduce additional error to GG-PA-RE.  For $t=0.1$, fixed-$t$ GG-PA exhibits markedly higher error than GG-PA-RE, which we attribute to poor mixing.  We confirm this hypothesis through the integrated autocorrelation time $\tau_{\mathrm{int}}$ in Fig.~\ref{fig:double_well}d. At $t=0.1$, fixed-$t$ GG-PA mixes more slowly than MD.  Otherwise, both algorithms outperform MD, with GG-PA-RE providing the biggest speedups in the stiff low-$t$ regime.
Returning to Fig.~\ref{fig:double_well}c, 
for diffusion times above the bound $t_{\max}$, the error grows steadily larger with $t$ for both fixed-$t$ GG-PA and GG-PA-RE. The quantitative agreement between the two algorithms again suggests a common source of error, in this case the breakdown of Proposition \ref{prop:finite_t_exactness} as predicted.

\subsection{$\phi^4$ Lattice Model}
\label{sec:phi4_lattice}

We next test whether the same sampling-time composition can scale to an interacting many-body system with collective behavior that is absent from the prior. To this end, we consider a 2D Ginzburg-Landau $\phi^4$ lattice field theory \cite{milchev1986finite}. The system is a scalar field $\phi\in\mathbb{R}^{L\times L}$ ($L=32$) with energy:
\begin{equation}
H(\phi)=\sum_i (\phi_i^2-1)^2+J\sum_{\langle i,j\rangle}(\phi_i-\phi_j)^2-h\sum_i\phi_i.
\label{eq:phi4_energy}
\end{equation}
Because the intersite interaction term is quadratic and the forward diffusion kernel is linear-Gaussian, this model again yields exact results at finite $t$. We first train a diffusion prior solely on the local on-site double-well factor $\exp[-(\phi_i^2-1)^2]$ and then compose $L\times L$ copies of it to sample the target distribution $\pi(\phi) \propto \exp(-H(\phi)/T)$.

In our evaluation, we fix the temperature at $T=1$ and scan the parameters $J$ and $h$. We measure the order parameter $m=(1/L^2)\sum_i \phi_i$, the susceptibility $\chi=L^2\left(\langle m^2\rangle-\langle |m|\rangle^2\right)$, and the integrated autocorrelation time $\tau_{\mathrm{int}}$ of $|m|$. As a reference, we use the standard checkerboard Metropolis Monte Carlo (MC) and also evaluate the replica-exchange variant GG-PA-RE.

\begin{figure}[hbt]
    \centering
    \includegraphics[width=\linewidth]{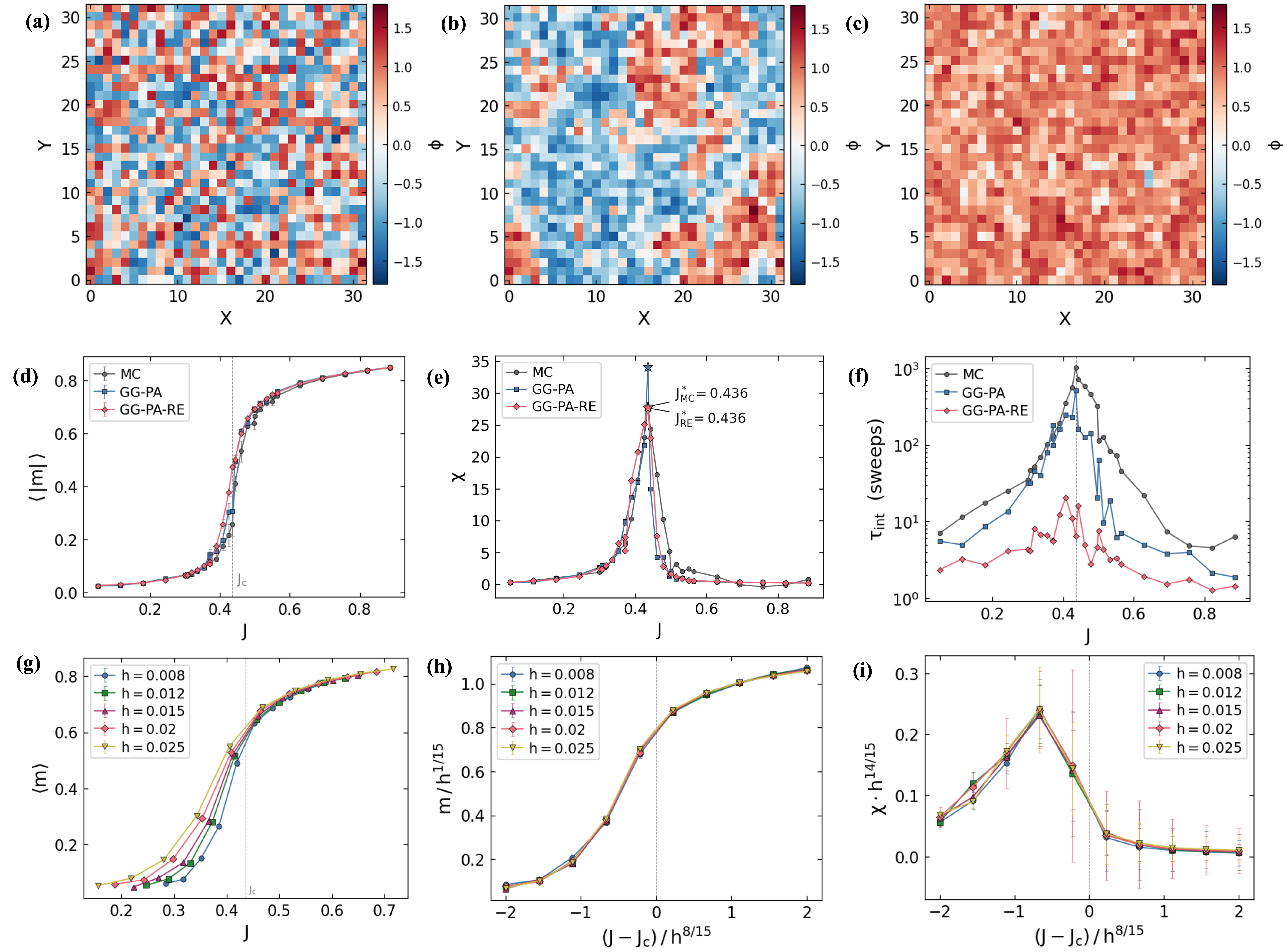}
    \caption{
    \textbf{2D Ginzburg-Landau $\phi^4$ model across the phase transition.}
    \textbf{(a--c):} Representative field configurations at $h=0$. 
    \textbf{(d--e):} Zero-field thermodynamic observables. GG-PA tracks the MC phase transition. \textbf{(f):} Integrated autocorrelation time $\tau_{\mathrm{int}}$. GG-PA-RE yields orders-of-magnitude speedups close to $J_c$ (dotted line).
    \textbf{(g--i):} Scaling and universal data collapse, confirming that GG-PA reproduces the expected critical behavior.
    }
    \label{fig:phi4_3by3}
\end{figure}

\paragraph{Equilibrium behavior and critical phenomena.}
As shown in Fig.~\ref{fig:phi4_3by3}, GG-PA correctly reproduces the equilibrium behavior of the model across the phase transition. Quantitatively, the thermodynamic observables agree closely with the MC reference across the full parameter scan. In particular, GG-PA captures the onset of spontaneous symmetry breaking, reflected in the sharp rise of $\langle |m| \rangle$, as well as the pronounced susceptibility peak at the critical coupling $J_c\approx 0.436$. Replica exchange further reduces the autocorrelation time near the critical point (Fig.~\ref{fig:phi4_3by3}f). Furthermore, in the presence of a finite external field $h$, the model belongs to the 2D Ising universality class, with critical exponents $\beta_{\mathrm{Ising}}=1/8$ and $\delta=15$ \cite{binney1992theory,pelissetto2002critical,goldenfeld2018lectures}. Applying the corresponding scaling transformation to GG-PA samples yields a collapse of both the scaled magnetization and susceptibility (Fig.~\ref{fig:phi4_3by3}h,i).

\paragraph{Physical mechanism of replica exchange.}
To understand the significant acceleration from replica exchange, we analyze the intermediate diffusion replicas (details in Appendix C, Fig.~S2). Injecting generative noise softens the local double-well barriers, which analytically translates into a macroscopic attenuation of the effective intersite coupling. Consequently, the replica-dependent noise level acts as an additional thermodynamic control parameter, and high-noise replicas mix rapidly by accessing a disordered regime (Section \ref{sec:re_phase_transition}). This connects diffusion-time replica exchange to Hamiltonian parallel tempering \cite{fukunishi2002hamiltonian}: increasing diffusion time softens the effective landscape, allowing high-noise replicas to bypass barriers and accelerate mixing.

The success of this experiment is significant because the diffusion prior never sees intersite interactions during training. Therefore, the spontaneous emergence of complex collective behavior is not memorized by the prior but is driven entirely by our exact composition during inference.

\subsection{Alanine Dipeptide Systems}
\label{sec:alanine_systems}

Finally, we consider atomistic models of molecular systems based on  $N$-acetyl-alanyl-$N'$-methylamide (known as the alanine dipeptide \cite{tobias1992conformational}; henceforth abbreviated AD). Unlike the previous systems, the interactions are non-quadratic, and exact finite-$t$ recovery is no longer guaranteed. This setting therefore tests whether GG-PA remains effective beyond the quadratic / linear-Gaussian regime. In all cases, we start from a diffusion checkpoint trained exclusively on an isolated AD monomer in vacuum at $300$~K. We refer to this model as an isolated-monomer prior. To avoid equivariance complications, the prior is defined only on the backbone torsions $(\phi,\psi)$ of AD. We then compose this isolated-monomer prior at sampling time with explicit physical interactions evaluated by an MD engine \cite{eastman2023openmm}. Additional details are provided in Appendix B.

\begin{figure}[htpb]
    \centering
    \includegraphics[width=0.72\linewidth]{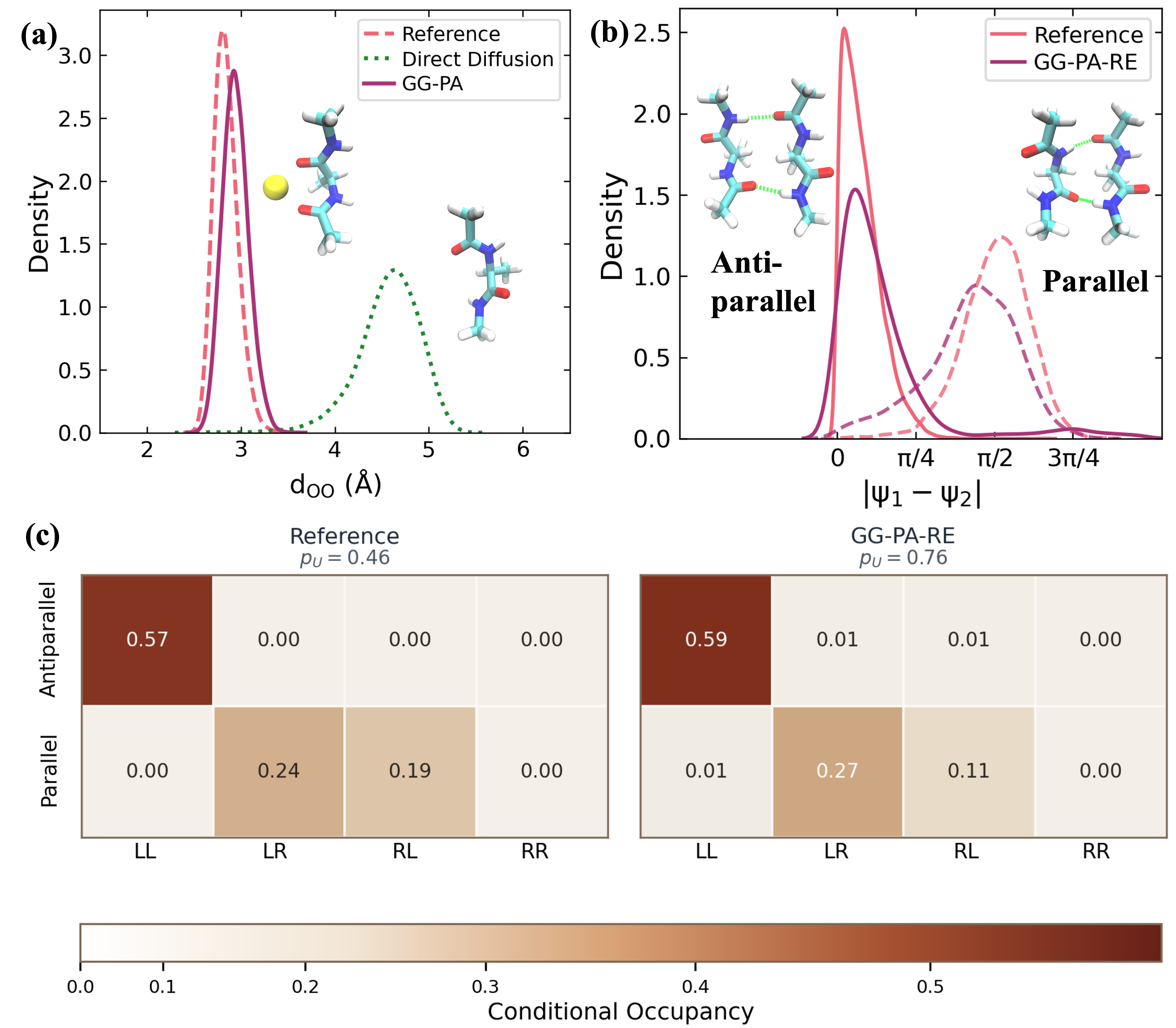} 
    \caption{\textbf{Alanine dipeptide systems.}
    \textbf{(a)} AD--Na$^+$: GG-PA captures the distribution shift associated with ion coordination.
    \textbf{(b,c)} AD dimer: two copies of the isolated-monomer prior are composed to form hydrogen-bonded parallel and anti-parallel dimers. In \textbf{(c)}, the heatmaps show the conditional occupancy of combinations of the dominant torsional states of the prior (see text); $p_U$ denotes the residual probability. GG-PA-RE recovers the qualitative symmetry-broken organization while generating more states outside the indicated combinations; these states are characterized structurally in Appendix \ref{sec:si_ad_dimer_results}.}
    \label{fig:ad_systems}
\end{figure}

\paragraph{AD--$\text{Na}^+$ System.}
We first couple the vacuum AD molecule with an explicit sodium cation ($\mathrm{Na}^+$). The carbonyl oxygens coordinate the cation, inducing a pronounced shift in the  O--O distance distribution. As shown in Fig.~\ref{fig:ad_systems}a, GG-PA captures this shift with near quantitative accuracy, and representative structures look reasonable.

\paragraph{AD Dimer.}
Next, we consider AD dimers, which test composition of priors, like the $\phi^4$ lattice model example above. We use two copies of the pretrained isolated-monomer prior to model the backbone dihedrals of the two AD molecules, while the remaining physical interactions are governed by the context (i.e., obtained from MD simulations). Because switching among dimer topologies and basin combinations is slow, we report the dimer statistics from the GG-PA-RE. Fixed-$t$ comparisons and integrated autocorrelation times are provided in Appendix \ref{sec:si_ad_dimer_results}. 
In vacuum, the monomer prior has two dominant torsional states, denoted $L$ and $R$, located approximately at $(\phi,\psi)\approx(-139^\circ,154^\circ)$ and $(\phi,\psi)\approx(-82^\circ,63^\circ)$, respectively (details in Appendix B). When two AD molecules interact, intermolecular hydrogen bonding stabilizes anti-parallel and parallel topologies. For analysis, we first assign each dimer configuration a topology and then assign each monomer to its $L$ or $R$ state. This yields eight assigned topology--torsion state categories,
\[
\{\textit{anti-parallel}, \textit{parallel}\}\times\{LL, LR, RL, RR\},
\]
while configurations that do not satisfy the assignment criteria are collected into a residual unassigned category $U$. The precise assignment criteria are given in Appendix B. In Fig.~\ref{fig:ad_systems}c, the heatmaps report the conditional occupancy over the assigned categories, while $p_U$ denotes the probability of the residual unassigned category.

As shown in Fig.~\ref{fig:ad_systems}c, both the reference and GG-PA-RE concentrate on selected topology--torsion state combinations. In particular, the anti-parallel topology is dominated by the $LL$ state, while the parallel topology prefers the $LR$ and $RL$ states. These results show that the dimer does not inherit two independent monomer preferences but is strongly affected by intermolecular topology. GG-PA-RE captures the conditional selection of torsion states qualitatively but assigns more mass to the residual unassigned category than the MD reference. 
We decompose this residual category and characterize the contributing structures in Appendix \ref{sec:si_ad_dimer_results}.

Unlike the systems with quadratic interactions above, the alanine dipeptide systems exhibit deviations from the reference results (Fig.~\ref{fig:ad_systems}c). We attribute the errors to three factors: (1) model capacity limits and approximation error in the base diffusion prior; (2) double counting of interactions from training the prior on data from MD simulations with the molecule fully represented by the force field; and (3) finite-$t$ approximation error, since the non-quadratic MD context makes the method only asymptotically exact as $t\to 0$. Despite these limitations, the results show that GG-PA-RE retains clear practical value for sampling both environmental effects and states of systems composed from multiple diffusion priors.

\section{Discussion and Conclusion}

Here we introduced Generative Gibbs for Physics-Aware Sampling (GG-PA), a framework for composing pretrained diffusion priors with explicit context during inference. Instead of requiring all relevant context to be expressed on the variables of a single diffusion prior, GG-PA treats diffusion priors, physical models, constraints, and simulators in a unified fashion through an augmented space. This enables a modular strategy in which learned priors and explicit physics are applied where they are most appropriate, which is especially useful in scientific systems with environmental degrees of freedom that are high-dimensional and/or readily handled by force fields.

Our results support three main conclusions. First, up to errors in the priors, GG-PA is asymptotically exact for decomposable systems and remains exact at finite diffusion times in systems with quadratic interactions. This finite-$t$ result can also be used to correct a covariance bias in existing split Gibbs samplers for linear inverse problems (Remark~\ref{rem:split_gibbs}). Second, diffusion-time replica exchange provides a practical way to mitigate the exploration-fidelity trade-off in stiff low-$t$ regimes. Third, beyond the quadratic / linear-Gaussian regime, GG-PA remains useful in atomistic systems and can recover context-induced distribution shifts and capture emergent collective behavior from isolated-component priors. Overall, these findings position GG-PA as a practical approach for combining pretrained generative priors with explicit physical contexts.

\paragraph{Limitations and Future Work.}
GG-PA has three main limitations. Exact finite-$t$ recovery currently relies on quadratic / linear-Gaussian structure, so other settings incur approximation error at finite $t$. As an MCMC-based method, GG-PA can also mix slowly in stiff or high-dimensional systems; replica exchange to improve mixing comes at the cost of carrying multiple replicas. Finally, performance depends on both prior fidelity and decomposition quality: poor projection design, low prior--context overlap, or residual double-counting can degrade results. Future work should therefore target broader context schedules, including nontrivial annealed schedules beyond the quadratic / linear-Gaussian setting, more efficient aggregation and tempering schemes,  multiscale decompositions, and priors for explicit force-field contexts. More broadly, GG-PA suggests a modular paradigm in which reusable component-level priors are composed at inference time with explicit physical context, avoiding the need to retrain monolithic models whenever the system or environment changes. Natural next steps include coupling pretrained protein backbone priors with explicit solvent or ion environments and assembling molecular complexes from fragment-level generative models.

\section*{Data Availability}
Code and scripts to reproduce all experiments, along with trained checkpoints and processed datasets,
are available at \url{https://github.com/wwz171/gg-pa-released.git}. Raw trajectories and figure data are available upon request and will also be included in the repository release.

\section*{Acknowledgment}
This work was supported by National Institutes of Health award R35 GM136381 and completed with computational resources administered by the University of Chicago Research Computing Center, including Beagle-3, a shared GPU cluster for biomolecular sciences supported by the NIH under the High-End Instrumentation (HEI) grant program award 1S10OD028655-0. JW's effort was supported by National Science Foundation award 2425899.


\newpage
\appendix
\renewcommand{\thefigure}{S\arabic{figure}}
\setcounter{figure}{0}
\renewcommand{\thetable}{S\arabic{table}}
\setcounter{table}{0}
\renewcommand{\theequation}{S\arabic{equation}}
\setcounter{equation}{0}
\renewcommand{\thealgorithm}{S\arabic{algorithm}}
\setcounter{algorithm}{0}

\label{appendix:details}
\section{Theoretical Details}
\label{sec:theoretical_details}

This appendix collects the technical derivations underlying the claims in the main text. We first recall the posterior-sampling interpretation of diffusion denoising, which justifies the prior update in GG-PA. We then prove the finite-time exactness result in the setting with quadratic interactions and derive the corresponding critical diffusion-time condition. Finally, we derive the replica-exchange acceptance ratio and summarize the reduced potentials used for multi-state reweighting.

\subsection{Diffusion models as posterior samplers}
\label{si:diffusion_posterior}

The denoising step of a diffusion model can be interpreted as posterior sampling under a known corruption process. Let $\mathbf{x}(0)$ denote a clean sample and $\mathbf{x}(t)$ its corrupted version at diffusion time $t$. By Bayes' rule,
\begin{equation}
p(\mathbf{x}(0) \mid \mathbf{x}(t))
=
\frac{p(\mathbf{x}(0))\,q_t(\mathbf{x}(t) \mid \mathbf{x}(0))}
{\int p(\mathbf{x}')\,q_t(\mathbf{x}(t) \mid \mathbf{x}')\,d\mathbf{x}'}
\propto
p(\mathbf{x}(0))\,q_t(\mathbf{x}(t) \mid \mathbf{x}(0)),
\label{eq:si_bayes_posterior}
\end{equation}
where $p(\mathbf{x}(0))$ is the clean-data distribution and $q_t(\mathbf{x}(t) \mid \mathbf{x}(0))$ is the forward diffusion kernel. Thus, denoising amounts to sampling from the posterior distribution over clean states consistent with the observed noisy input $\mathbf{x}(t)$.

In our setting, the projected system state $\mathbf{y}_i := \Phi_i(\mathbf{s})$ plays the role of $\mathbf{x}(t)$ for prior $i$. The corresponding conditional factor in the joint target is
\begin{equation}
p_i(\mathbf{x}_i)\,q_t^{(i)}(\mathbf{y}_i \mid \mathbf{x}_i),
\end{equation}
up to normalization, so sampling $\mathbf{x}_i$ conditional on $\mathbf{y}_i$ is precisely the posterior-sampling problem in Eq.~\eqref{eq:si_bayes_posterior}. Operationally, a pretrained diffusion model provides an efficient approximate sampler for this conditional by initializing the reverse process at $\mathbf{y}_i$ and integrating the learned reverse-time dynamics from time $t$ to $0$ to obtain a sample $\mathbf{x}_i$ \cite{vincent2008extracting,ho2020denoising}. The same posterior-sampling interpretation also applies to the wrapped-Gaussian torsion diffusion used for alanine dipeptide, with $q_t$ replaced by the corresponding periodic forward kernel on the torus.

\subsection{Finite-time exactness in the Gaussian setting}
\label{sec:proofs_propositions_corollary}

We next prove Proposition~2 and derive Corollary~3 from the main text. In all Gaussian-kernel applications considered in this work, the stacked forward operator has the form $\mathbf{A}_t=\alpha_t I$ with $\alpha_t>0$ for the diffusion times of interest; we state the result for a general invertible matrix $\mathbf{A}_t$.
\setcounter{theorem}{1}
\begin{proposition}[Finite-Time Exactness via Gaussian Deconvolution]
Let $\mathbf{s}_{\mathrm{sys}}=(\mathbf{s}_1,\dots,\mathbf{s}_K)$ and $\mathbf{x}=(\mathbf{x}_1,\dots,\mathbf{x}_K)$. Suppose the interaction energy is quadratic in $\mathbf{s}_{\mathrm{sys}}$ conditional on $\mathbf{s}_{\mathrm{env}}$, so that
\begin{equation}
\exp\!\big(-\beta U_{\mathrm{int}}(\mathbf{s}_{\mathrm{sys}},\mathbf{s}_{\mathrm{env}})\big)
\propto
\mathcal{N}\!\big(
\mathbf{s}_{\mathrm{sys}}
\mid
\boldsymbol{\mu}_c(\mathbf{s}_{\mathrm{env}}),
\boldsymbol{\Sigma}_c(\mathbf{s}_{\mathrm{env}})
\big).
\label{eq:prop2_quad_interaction_si}
\end{equation}
Assume further that the stacked forward kernel is linear-Gaussian,
\begin{equation}
\prod_{i=1}^K q_t^{(i)}(\mathbf{s}_i \mid \mathbf{x}_i)
=
\mathcal{N}\!\big(
\mathbf{s}_{\mathrm{sys}}
\mid
\mathbf{A}_t \mathbf{x},
\boldsymbol{\Sigma}_t
\big),
\label{eq:prop2_forward_si}
\end{equation}
where $\mathbf{A}_t$ is square and invertible for the diffusion times under consideration, and parameterize the interaction part of the finite-$t$ context schedule in the same family:
\begin{equation}
q_{\mathrm{int}}(\mathbf{s}_{\mathrm{sys}}\mid \mathbf{s}_{\mathrm{env}},t)
=
\mathcal{N}\!\big(
\mathbf{s}_{\mathrm{sys}}
\mid
\boldsymbol{\mu}_q(\mathbf{s}_{\mathrm{env}},t),
\boldsymbol{\Sigma}_q(\mathbf{s}_{\mathrm{env}},t)
\big).
\label{eq:prop2_qint_si}
\end{equation}
Then the finite-time marginal remains exactly uncorrupted,
\begin{equation}
\pi_t(\mathbf{x},\mathbf{s}_{\mathrm{env}})
\propto
\pi_0(\mathbf{x},\mathbf{s}_{\mathrm{env}}),
\label{eq:prop2_goal_si}
\end{equation}
if and only if
\begin{equation}
\boldsymbol{\mu}_q(\mathbf{s}_{\mathrm{env}},t)
=
\mathbf{A}_t \boldsymbol{\mu}_c(\mathbf{s}_{\mathrm{env}}),
\qquad
\boldsymbol{\Sigma}_q(\mathbf{s}_{\mathrm{env}},t)
=
\mathbf{A}_t \boldsymbol{\Sigma}_c(\mathbf{s}_{\mathrm{env}})\mathbf{A}_t^\top
-
\boldsymbol{\Sigma}_t.
\label{eq:prop2_matching_si}
\end{equation}
\end{proposition}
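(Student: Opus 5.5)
We write out the joint target explicitly in the decomposable setting and marginalize $\mathbf{s}_{\mathrm{sys}}$. With $\Phi_i(\mathbf{s})=\mathbf{s}_i$, perfect priors $p_i\propto e^{-\beta U_i}$, and the finite-$t$ context schedule $q_{\mathrm{ctx}}(\mathbf{s},t)=e^{-\beta U_{\mathrm{env}}(\mathbf{s}_{\mathrm{env}})}\,q_{\mathrm{int}}(\mathbf{s}_{\mathrm{sys}}\mid\mathbf{s}_{\mathrm{env}},t)$, Eq.~\eqref{eq:joint_distribution_multi} gives
\begin{equation*}
\pi_t(\mathbf{x},\mathbf{s}_{\mathrm{env}})\propto e^{-\beta U_{\mathrm{env}}(\mathbf{s}_{\mathrm{env}})}\prod_i p_i(\mathbf{x}_i)\int q_{\mathrm{int}}(\mathbf{s}_{\mathrm{sys}}\mid\mathbf{s}_{\mathrm{env}},t)\,\mathcal{N}(\mathbf{s}_{\mathrm{sys}}\mid\mathbf{A}_t\mathbf{x},\boldsymbol{\Sigma}_t)\,d\mathbf{s}_{\mathrm{sys}}.
\end{equation*}
By Proposition~\ref{prop:asymptotic_exactness}, the reference marginal is $\pi_0(\mathbf{x},\mathbf{s}_{\mathrm{env}})\propto e^{-\beta U_{\mathrm{env}}}\prod_i p_i(\mathbf{x}_i)\,\mathcal{N}(\mathbf{x}\mid\boldsymbol{\mu}_c,\boldsymbol{\Sigma}_c)$. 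The prior and environment factors are common to both, so the claim reduces to comparing the integral $I_t(\mathbf{x},\mathbf{s}_{\mathrm{env}})$ with $\mathcal{N}(\mathbf{x}\mid\boldsymbol{\mu}_c,\boldsymbol{\Sigma}_c)$.

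Next we apply the Gaussian convolution identity $\int\mathcal{N}(\mathbf{s}\mid\mathbf{m}_1,\mathbf{S}_1)\mathcal{N}(\mathbf{s}\mid\mathbf{m}_2,\mathbf{S}_2)\,d\mathbf{s}=\mathcal{N}(\mathbf{m}_1\mid\mathbf{m}_2,\mathbf{S}_1+\mathbf{S}_2)$. This gives
\begin{equation*}
I_t=\mathcal{N}(\mathbf{A}_t\mathbf{x}\mid\boldsymbol{\mu}_q,\boldsymbol{\Sigma}_q+\boldsymbol{\Sigma}_t).
\end{equation*}
Since $\mathbf{A}_t$ is invertible, the change of variables yields
\begin{equation*}
I_t=|\det\mathbf{A}_t|^{-1}\,\mathcal{N}\!\big(\mathbf{x}\mid\mathbf{A}_t^{-1}\boldsymbol{\mu}_q,\;\mathbf{A}_t^{-1}(\boldsymbol{\Sigma}_q+\boldsymbol{\Sigma}_t)\mathbf{A}_t^{-\top}\big).
\end{equation*}
The factor $|\det\mathbf{A}_t|^{-1}$ does not depend on $\mathbf{x}$ or $\mathbf{s}_{\mathrm{env}}$, so it is absorbed into the proportionality constant. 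Substituting Eq.~\eqref{eq:prop2_matching_si} gives mean $\boldsymbol{\mu}_c$ and covariance $\boldsymbol{\Sigma}_c$, which proves sufficiency.

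For necessity we must handle the proportionality constants carefully; this is the main obstacle. The relation $\pi_t\propto\pi_0$ is proportionality in $(\mathbf{x},\mathbf{s}_{\mathrm{env}})$ jointly, so the constant cannot depend on $\mathbf{s}_{\mathrm{env}}$.
\begin{itemize}
\item Suppose the relation holds. For each fixed $\mathbf{s}_{\mathrm{env}}$, the ratio of the two Gaussians in $\mathbf{x}$ is constant on the support of $\prod_i p_i$.
\item Assuming that support has nonempty interior, the log of the ratio is a quadratic polynomial in $\mathbf{x}$ that is constant on an open set. Its quadratic and linear coefficients therefore vanish.
\item This forces equal precisions and equal means: $\mathbf{A}_t^{-1}(\boldsymbol{\Sigma}_q+\boldsymbol{\Sigma}_t)\mathbf{A}_t^{-\top}=\boldsymbol{\Sigma}_c$ and $\mathbf{A}_t^{-1}\boldsymbol{\mu}_q=\boldsymbol{\mu}_c$, which is exactly Eq.~\eqref{eq:prop2_matching_si}.
\end{itemize}
Two points need to be stated explicitly. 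First, the quadratic-interaction hypothesis Eq.~\eqref{eq:prop2_quad_interaction_si} must be read as proportionality with an $\mathbf{s}_{\mathrm{env}}$-independent constant, or else any normalization mismatch must be absorbed into the environment factor. Second, we need the nondegenerate-support assumption on the priors.

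Finally, we note that $\boldsymbol{\Sigma}_q$ obtained this way is a valid covariance only when $\mathbf{A}_t\boldsymbol{\Sigma}_c\mathbf{A}_t^\top-\boldsymbol{\Sigma}_t\succeq0$. This is the condition that feeds into Corollary~\ref{cor:time_bound}.
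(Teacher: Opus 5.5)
Your proposal is correct and follows the paper's proof essentially step for step: you marginalize $\mathbf{s}_{\mathrm{sys}}$, apply the Gaussian convolution identity to obtain $\mathcal{N}(\mathbf{A}_t\mathbf{x}\mid\boldsymbol{\mu}_q,\boldsymbol{\Sigma}_q+\boldsymbol{\Sigma}_t)$, pull it back through the invertible $\mathbf{A}_t$, and match mean and covariance against $\mathcal{N}(\mathbf{x}\mid\boldsymbol{\mu}_c,\boldsymbol{\Sigma}_c)$. Your two caveats are refinements that the paper glosses over, not gaps: the paper simply asserts that proportional Gaussians have equal moments (harmless there, since the Boltzmann priors $p_i\propto e^{-\beta U_i}$ are strictly positive), and it drops the $\mathbf{s}_{\mathrm{env}}$-dependent normalization of $e^{-\beta U_{\mathrm{int}}}$ when writing $\pi_0$, even allowing an $\mathbf{s}_{\mathrm{env}}$-dependent matching constant, so joint (rather than merely conditional) proportionality does require the reading you state.
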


\begin{proof}
We work in the decomposable setting of Proposition~1, now at finite diffusion time $t>0$. Decompose the full state as
\(
\mathbf{s}=(\mathbf{s}_{\mathrm{sys}},\mathbf{s}_{\mathrm{env}})
\),
where
\(
\mathbf{s}_{\mathrm{sys}}=(\mathbf{s}_1,\dots,\mathbf{s}_K)
\).
We choose the finite-$t$ context schedule in the factorized form
\begin{equation}
q_{\mathrm{ctx}}(\mathbf{s},t)
\propto
\exp\!\big(-\beta U_{\mathrm{env}}(\mathbf{s}_{\mathrm{env}})\big)\,
q_{\mathrm{int}}(\mathbf{s}_{\mathrm{sys}}\mid \mathbf{s}_{\mathrm{env}},t),
\label{eq:prop2_qctx_factorized_si}
\end{equation}
so that only the interaction term is annealed, while the environment contribution remains exact.

Under the joint target in the main text, the joint density is
\begin{equation}
\pi_t(\mathbf{s},\mathbf{x})
\propto
\exp\!\big(-\beta U_{\mathrm{env}}(\mathbf{s}_{\mathrm{env}})\big)\,
q_{\mathrm{int}}(\mathbf{s}_{\mathrm{sys}}\mid \mathbf{s}_{\mathrm{env}},t)
\prod_{i=1}^K
p_i(\mathbf{x}_i)\,
q_t^{(i)}(\mathbf{s}_i\mid \mathbf{x}_i).
\label{eq:prop2_joint_si}
\end{equation}
Using the stacked Gaussian forward model in Eq.~\eqref{eq:prop2_forward_si}, this can be rewritten as
\begin{equation}
\pi_t(\mathbf{s},\mathbf{x})
\propto
\exp\!\big(-\beta U_{\mathrm{env}}(\mathbf{s}_{\mathrm{env}})\big)
\Big(\prod_{i=1}^K p_i(\mathbf{x}_i)\Big)
q_{\mathrm{int}}(\mathbf{s}_{\mathrm{sys}}\mid \mathbf{s}_{\mathrm{env}},t)\,
\mathcal{N}\!\big(
\mathbf{s}_{\mathrm{sys}}
\mid
\mathbf{A}_t\mathbf{x},
\boldsymbol{\Sigma}_t
\big).
\label{eq:prop2_joint_gaussian_si}
\end{equation}

Marginalizing out $\mathbf{s}_{\mathrm{sys}}$ gives
\begin{equation}
\begin{aligned}
\pi_t(\mathbf{x},\mathbf{s}_{\mathrm{env}})
&\propto
\exp\!\big(-\beta U_{\mathrm{env}}(\mathbf{s}_{\mathrm{env}})\big)
\Big(\prod_{i=1}^K p_i(\mathbf{x}_i)\Big)
\\
&\qquad \times
\int d\mathbf{s}_{\mathrm{sys}}\;
q_{\mathrm{int}}(\mathbf{s}_{\mathrm{sys}}\mid \mathbf{s}_{\mathrm{env}},t)\,
\mathcal{N}\!\big(
\mathbf{s}_{\mathrm{sys}}
\mid
\mathbf{A}_t\mathbf{x},
\boldsymbol{\Sigma}_t
\big).
\end{aligned}
\label{eq:prop2_marginal_xt_si}
\end{equation}

By Proposition~1, the $t\to 0$ limit recovers
\begin{equation}
\pi_0(\mathbf{x},\mathbf{s}_{\mathrm{env}})
\propto
\exp\!\big(-\beta U_{\mathrm{env}}(\mathbf{s}_{\mathrm{env}})\big)
\Big(\prod_{i=1}^K p_i(\mathbf{x}_i)\Big)
\exp\!\big(-\beta U_{\mathrm{int}}(\mathbf{x},\mathbf{s}_{\mathrm{env}})\big).
\label{eq:prop2_pi0_form_si}
\end{equation}
Under the quadratic assumption in Eq.~\eqref{eq:prop2_quad_interaction_si}, this is equivalently
\begin{equation}
\pi_0(\mathbf{x},\mathbf{s}_{\mathrm{env}})
\propto
\exp\!\big(-\beta U_{\mathrm{env}}(\mathbf{s}_{\mathrm{env}})\big)
\Big(\prod_{i=1}^K p_i(\mathbf{x}_i)\Big)
\mathcal{N}\!\big(
\mathbf{x}
\mid
\boldsymbol{\mu}_c(\mathbf{s}_{\mathrm{env}}),
\boldsymbol{\Sigma}_c(\mathbf{s}_{\mathrm{env}})
\big).
\label{eq:prop2_pi0_gaussian_si}
\end{equation}

Comparing Eqs.~\eqref{eq:prop2_marginal_xt_si} and \eqref{eq:prop2_pi0_gaussian_si}, we see that
\(
\pi_t(\mathbf{x},\mathbf{s}_{\mathrm{env}})\propto \pi_0(\mathbf{x},\mathbf{s}_{\mathrm{env}})
\)
holds if and only if
\begin{equation}
\int d\mathbf{s}_{\mathrm{sys}}\;
q_{\mathrm{int}}(\mathbf{s}_{\mathrm{sys}}\mid \mathbf{s}_{\mathrm{env}},t)\,
\mathcal{N}\!\big(
\mathbf{s}_{\mathrm{sys}}
\mid
\mathbf{A}_t\mathbf{x},
\boldsymbol{\Sigma}_t
\big)
\propto
\mathcal{N}\!\big(
\mathbf{x}
\mid
\boldsymbol{\mu}_c(\mathbf{s}_{\mathrm{env}}),
\boldsymbol{\Sigma}_c(\mathbf{s}_{\mathrm{env}})
\big),
\label{eq:prop2_deconv_condition_si}
\end{equation}
where the proportionality constant may depend on $\mathbf{s}_{\mathrm{env}}$ and $t$, but not on $\mathbf{x}$.

Substituting the Gaussian ansatz in Eq.~\eqref{eq:prop2_qint_si}, the integral in Eq.~\eqref{eq:prop2_deconv_condition_si} becomes a Gaussian convolution:
\begin{equation}
\int d\mathbf{s}_{\mathrm{sys}}\;
\mathcal{N}\!\big(
\mathbf{s}_{\mathrm{sys}}
\mid
\boldsymbol{\mu}_q,
\boldsymbol{\Sigma}_q
\big)\,
\mathcal{N}\!\big(
\mathbf{s}_{\mathrm{sys}}
\mid
\mathbf{A}_t\mathbf{x},
\boldsymbol{\Sigma}_t
\big)
\propto
\mathcal{N}\!\big(
\mathbf{A}_t\mathbf{x}
\mid
\boldsymbol{\mu}_q,
\boldsymbol{\Sigma}_q+\boldsymbol{\Sigma}_t
\big),
\label{eq:prop2_gaussian_conv_si}
\end{equation}
where we suppress the dependence of $\boldsymbol{\mu}_q$ and $\boldsymbol{\Sigma}_q$ on $\mathbf{s}_{\mathrm{env}}$ and $t$ for notational simplicity.

Because $\mathbf{A}_t$ is invertible, the right-hand side of Eq.~\eqref{eq:prop2_gaussian_conv_si} can be rewritten as a Gaussian density in $\mathbf{x}$:
\begin{equation}
\mathcal{N}\!\big(
\mathbf{A}_t\mathbf{x}
\mid
\boldsymbol{\mu}_q,
\boldsymbol{\Sigma}_q+\boldsymbol{\Sigma}_t
\big)
=
|\det \mathbf{A}_t|^{-1}
\mathcal{N}\!\big(
\mathbf{x}
\mid
\mathbf{A}_t^{-1}\boldsymbol{\mu}_q,
\mathbf{A}_t^{-1}(\boldsymbol{\Sigma}_q+\boldsymbol{\Sigma}_t)\mathbf{A}_t^{-\top}
\big).
\label{eq:prop2_pullback_gaussian_si}
\end{equation}
Therefore, exact finite-time matching in Eq.~\eqref{eq:prop2_deconv_condition_si} is equivalent to
\begin{equation}
\mathbf{A}_t^{-1}\boldsymbol{\mu}_q(\mathbf{s}_{\mathrm{env}},t)
=
\boldsymbol{\mu}_c(\mathbf{s}_{\mathrm{env}}),
\label{eq:prop2_mean_match_inverse_si}
\end{equation}
and
\begin{equation}
\mathbf{A}_t^{-1}\big(\boldsymbol{\Sigma}_q(\mathbf{s}_{\mathrm{env}},t)+\boldsymbol{\Sigma}_t\big)\mathbf{A}_t^{-\top}
=
\boldsymbol{\Sigma}_c(\mathbf{s}_{\mathrm{env}}).
\label{eq:prop2_cov_match_inverse_si}
\end{equation}
Multiplying Eq.~\eqref{eq:prop2_mean_match_inverse_si} by $\mathbf{A}_t$ and rearranging Eq.~\eqref{eq:prop2_cov_match_inverse_si} yields
\begin{equation}
\boldsymbol{\mu}_q(\mathbf{s}_{\mathrm{env}},t)
=
\mathbf{A}_t\boldsymbol{\mu}_c(\mathbf{s}_{\mathrm{env}}),
\label{eq:prop2_mean_match_si}
\end{equation}
and
\begin{equation}
\boldsymbol{\Sigma}_q(\mathbf{s}_{\mathrm{env}},t)
=
\mathbf{A}_t\boldsymbol{\Sigma}_c(\mathbf{s}_{\mathrm{env}})\mathbf{A}_t^\top
-
\boldsymbol{\Sigma}_t,
\label{eq:prop2_cov_match_rearranged_si}
\end{equation}
which are exactly the matching conditions in Eq.~\eqref{eq:prop2_matching_si}.

This proves the proposition.
\end{proof}
\setcounter{theorem}{2}
\begin{corollary}[Critical diffusion-time bound]
\label{cor:critical_time_bound_si}
Under the assumptions of Proposition~2, the construction is well-posed if and only if $\boldsymbol{\Sigma}_q(\mathbf{s}_{\mathrm{env}},t)$ is positive-semidefinite 
or equivalently,
\begin{equation}
\boldsymbol{\Sigma}_t \preceq
\mathbf{A}_t \boldsymbol{\Sigma}_c(\mathbf{s}_{\mathrm{env}})\mathbf{A}_t^\top,
\label{eq:corollary_loewner_condition_si}
\end{equation}
where $\preceq$ denotes positive-semidefinite ordering. Hence exact finite-time matching is possible only up to the largest diffusion time $t_{\max}$ for which Eq.~\eqref{eq:corollary_loewner_condition_si} holds.
\end{corollary}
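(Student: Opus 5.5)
The argument follows directly from Proposition~2, which shows that exact finite-time matching within the Gaussian family is possible if and only if $\boldsymbol{\Sigma}_q(\mathbf{s}_{\mathrm{env}},t)=\mathbf{A}_t\boldsymbol{\Sigma}_c\mathbf{A}_t^\top-\boldsymbol{\Sigma}_t$. So the question of well-posedness reduces entirely to whether this prescribed matrix is admissible as the covariance of $q_{\mathrm{int}}$.

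\textbf{Step 1: well-posed $\Leftrightarrow$ $\boldsymbol{\Sigma}_q\succeq 0$.} The matrix $\boldsymbol{\Sigma}_q$ is automatically symmetric, since $\boldsymbol{\Sigma}_c$ and $\boldsymbol{\Sigma}_t$ are.

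\emph{Necessity.} If $\boldsymbol{\Sigma}_q$ has a negative eigenvalue, take an eigenvector $v$ with $v^\top\boldsymbol{\Sigma}_q v<0$. Then $q_{\mathrm{int}}$ grows like $\exp(+c\,(v^\top \mathbf{s}_{\mathrm{sys}})^2)$ along $v$, so it is neither a probability density nor a normalizable context factor. Moreover, the integral in Eq.~\eqref{eq:prop2_gaussian_conv_si} diverges wherever the Gaussian decay of $\mathcal{N}(\mathbf{s}_{\mathrm{sys}}\mid\mathbf{A}_t\mathbf{x},\boldsymbol{\Sigma}_t)$ along $v$ fails to dominate this growth. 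In either case the joint target $\pi_t$ is not a valid density.

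\emph{Sufficiency.} If $\boldsymbol{\Sigma}_q\succ0$, the Gaussian convolution identity used in Proposition~2 applies verbatim. If $\boldsymbol{\Sigma}_q$ is singular but PSD, I interpret $\mathcal{N}(\cdot\mid\boldsymbol{\mu}_q,\boldsymbol{\Sigma}_q)$ as a degenerate Gaussian measure supported on an affine subspace. The convolution of a degenerate Gaussian with $\mathcal{N}(0,\boldsymbol{\Sigma}_t)$ is still Gaussian with covariance $\boldsymbol{\Sigma}_q+\boldsymbol{\Sigma}_t$, which I check via characteristic functions. Hence the matching conclusion of Proposition~2 continues to hold.

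\textbf{Step 2: the equivalent Loewner form.} Since $\boldsymbol{\Sigma}_q\succeq0$ means $\mathbf{A}_t\boldsymbol{\Sigma}_c\mathbf{A}_t^\top-\boldsymbol{\Sigma}_t\succeq0$, this is exactly the definition of $\boldsymbol{\Sigma}_t\preceq\mathbf{A}_t\boldsymbol{\Sigma}_c\mathbf{A}_t^\top$. An equivalent form, useful for intuition, comes from congruence by $\mathbf{A}_t^{-1}$, which preserves the PSD order: the condition becomes $\mathbf{A}_t^{-1}\boldsymbol{\Sigma}_t\mathbf{A}_t^{-\top}\preceq\boldsymbol{\Sigma}_c$. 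This says the effective noise in $\mathbf{x}$-coordinates must not exceed the thermal covariance.

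\textbf{Step 3: existence of $t_{\max}$.} This is the step I expect to be the main obstacle, because the statement implicitly assumes the admissible set of $t$ is an interval $[0,t_{\max}]$. For the schedules actually used, $\mathbf{A}_t=\alpha_t I$ and $\boldsymbol{\Sigma}_t=\sigma_t^2 I$, so the condition reduces to
\[
\frac{\sigma_t^2}{\alpha_t^2}\le\lambda_{\min}\big(\boldsymbol{\Sigma}_c(\mathbf{s}_{\mathrm{env}})\big).
\]
I would argue that $\sigma_t^2/\alpha_t^2$ is the noise-to-signal ratio, which is continuous and strictly increasing in $t$ for standard VP and VE schedules and vanishes at $t=0$. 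The admissible set is therefore $[0,t_{\max}]$, with $t_{\max}$ defined as the unique solution of the equality when one exists, and $t_{\max}=1$ otherwise.

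If $\boldsymbol{\Sigma}_c$ depends on $\mathbf{s}_{\mathrm{env}}$, the condition must hold for every $\mathbf{s}_{\mathrm{env}}$, so I take the infimum of $\lambda_{\min}(\boldsymbol{\Sigma}_c(\mathbf{s}_{\mathrm{env}}))$ over $\mathbf{s}_{\mathrm{env}}$. For a general invertible $\mathbf{A}_t$, monotonicity is not automatic. In that case I would simply define $t_{\max}$ as the supremum of the set of $t$ for which the condition holds on all of $[0,t]$, which is consistent with the phrasing ``exact matching is possible only up to $t_{\max}$.''
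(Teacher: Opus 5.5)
Your proposal is correct and follows the paper's own argument: Proposition~2 forces $\boldsymbol{\Sigma}_q=\mathbf{A}_t\boldsymbol{\Sigma}_c\mathbf{A}_t^\top-\boldsymbol{\Sigma}_t$, a Gaussian covariance is admissible exactly when it is positive semidefinite, and that is the stated Loewner condition by definition. Your additions go beyond the paper's one-line proof and are sound: the degenerate-Gaussian treatment of the boundary case, and the monotone noise-to-signal argument showing the admissible set is an interval $[0,t_{\max}]$. The one loose spot is the divergence remark in your necessity step ("wherever" the decay fails to dominate). It becomes exact once you note that $\boldsymbol{\Sigma}_q+\boldsymbol{\Sigma}_t=\mathbf{A}_t\boldsymbol{\Sigma}_c\mathbf{A}_t^\top\succ 0$, so the precision $\boldsymbol{\Sigma}_q^{-1}+\boldsymbol{\Sigma}_t^{-1}$ has the same inertia as an invertible $\boldsymbol{\Sigma}_q$; hence the $\mathbf{s}_{\mathrm{sys}}$-integral diverges for every $\mathbf{x}$ whenever $\boldsymbol{\Sigma}_q$ has a negative eigenvalue.
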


\begin{proof}
By Proposition~2,
\begin{equation}
\boldsymbol{\Sigma}_q(\mathbf{s}_{\mathrm{env}},t)
=
\mathbf{A}_t\boldsymbol{\Sigma}_c(\mathbf{s}_{\mathrm{env}})\mathbf{A}_t^\top
-
\boldsymbol{\Sigma}_t.
\end{equation}
A Gaussian covariance is valid if and only if it is positive semidefinite. Therefore,
\(
\boldsymbol{\Sigma}_q(\mathbf{s}_{\mathrm{env}},t)\succeq 0
\)
is equivalent to
\(
\boldsymbol{\Sigma}_t \preceq
\mathbf{A}_t \boldsymbol{\Sigma}_c(\mathbf{s}_{\mathrm{env}})\mathbf{A}_t^\top
\),
which gives the stated bound.
\end{proof}

\subsection{Replica exchange and multi-state reweighting}
\label{sec:si_re_mbar}

We now derive the replica-exchange acceptance ratio and the reduced potentials used for multi-state reweighting. Both constructions are naturally expressed in terms of the same effective energy associated with the joint target.

Let $\mathbf{z}=(\mathbf{s},\mathbf{x})$ denote the augmented state, where $\mathbf{x}=(\mathbf{x}_1,\dots,\mathbf{x}_K)$. Consider a ladder of replicas indexed by $k$, each associated with diffusion time $t_k$ and unnormalized target
\begin{equation}
\widetilde{\pi}_k(\mathbf{z})
=
q_{\mathrm{ctx}}(\mathbf{s},t_k)
\prod_{i=1}^K
p_i(\mathbf{x}_i)\,
q_{t_k}^{(i)}\!\big(\Phi_i(\mathbf{s})\mid \mathbf{x}_i\big).
\label{eq:si_replica_target}
\end{equation}
The corresponding reduced potential can be written as
\begin{equation}
\widetilde{u}_k(\mathbf{z})
=
-\log q_{\mathrm{ctx}}(\mathbf{s},t_k)
-
\sum_{i=1}^K \log p_i(\mathbf{x}_i)
-
\sum_{i=1}^K \log q_{t_k}^{(i)}\!\big(\Phi_i(\mathbf{s})\mid \mathbf{x}_i\big),
\label{eq:si_reduced_potential_full}
\end{equation}
up to an additive constant that depends only on the replica state $k$.

When proposing a swap between replicas $k$ and $\ell$, with current states $\mathbf{z}^{(k)}$ and $\mathbf{z}^{(\ell)}$, the Metropolis--Hastings \cite{metropolis1953equation} acceptance probability is
\begin{equation}
\alpha_{\mathrm{swap}}
=
\min\!\left\{
1,\;
\exp\!\Big[
-\widetilde{u}_k(\mathbf{z}^{(\ell)})
-\widetilde{u}_\ell(\mathbf{z}^{(k)})
+\widetilde{u}_k(\mathbf{z}^{(k)})
+\widetilde{u}_\ell(\mathbf{z}^{(\ell)})
\Big]
\right\}.
\label{eq:si_re_swap_general}
\end{equation}

Because the prior factors $p_i(\mathbf{x}_i)$ are identical across replicas, they cancel exactly in the swap ratio. The acceptance rule therefore simplifies to
\begin{equation}
\alpha_{\mathrm{swap}}
=
\min\!\left\{
1,\;
\frac{
q_{\mathrm{ctx}}(\mathbf{s}^{(\ell)},t_k)\,
q_{\mathrm{ctx}}(\mathbf{s}^{(k)},t_\ell)\,
\prod_i q_{t_k}^{(i)}\!\big(\Phi_i(\mathbf{s}^{(\ell)})\mid \mathbf{x}_i^{(\ell)}\big)\,
\prod_i q_{t_\ell}^{(i)}\!\big(\Phi_i(\mathbf{s}^{(k)})\mid \mathbf{x}_i^{(k)}\big)
}{
q_{\mathrm{ctx}}(\mathbf{s}^{(k)},t_k)\,
q_{\mathrm{ctx}}(\mathbf{s}^{(\ell)},t_\ell)\,
\prod_i q_{t_k}^{(i)}\!\big(\Phi_i(\mathbf{s}^{(k)})\mid \mathbf{x}_i^{(k)}\big)\,
\prod_i q_{t_\ell}^{(i)}\!\big(\Phi_i(\mathbf{s}^{(\ell)})\mid \mathbf{x}_i^{(\ell)}\big)
}
\right\}.
\label{eq:si_re_swap_simplified}
\end{equation}
This is the practical advantage of replica exchange in GG-PA: swap moves require evaluating only the explicit context factors and the known forward kernels, not the learned prior densities themselves.

\subsection{GG-PA with diffusion-time replica exchange}
\label{si:ggpa_re_algorithm}

For replica exchange, we run GG-PA simultaneously at a ladder of diffusion times
\(0<t_1<\cdots<t_R<1\). Let
\(\mathbf{z}^{(r)}=(\mathbf{s}^{(r)},\{\mathbf{x}_i^{(r)}\}_{i=1}^K)\)
denote the augmented state of replica \(r\). For compactness, define the
replica-specific exchange weight
\begin{equation}
\rho_r\!\left(\mathbf{s},\{\mathbf{x}_i\}\right)
=
q_{\mathrm{ctx}}^{(r)}(\mathbf{s})
\prod_{i=1}^K
q_{t_r}^{(i)}\!\left(\Phi_i(\mathbf{s})\mid \mathbf{x}_i\right),
\label{eq:si_exchange_weight}
\end{equation}
where \(q_{\mathrm{ctx}}^{(r)}(\mathbf{s})=q_{\mathrm{ctx}}(\mathbf{s},t_r)\)
for a standard context schedule. In fixed-context replica ladders, one may instead
use the same context factor for all replicas.

A proposed swap between neighboring replicas \(r\) and \(r+1\) is accepted with
probability
\begin{equation}
\alpha_{r,r+1}
=
\min\!\left[
1,\,
\frac{
\rho_r\!\left(\mathbf{s}^{(r+1)},\{\mathbf{x}_i^{(r+1)}\}\right)
\rho_{r+1}\!\left(\mathbf{s}^{(r)},\{\mathbf{x}_i^{(r)}\}\right)
}{
\rho_r\!\left(\mathbf{s}^{(r)},\{\mathbf{x}_i^{(r)}\}\right)
\rho_{r+1}\!\left(\mathbf{s}^{(r+1)},\{\mathbf{x}_i^{(r+1)}\}\right)
}
\right].
\label{eq:si_re_accept_compact}
\end{equation}
The intractable prior densities \(p_i(\mathbf{x}_i)\) cancel from this ratio.

\begin{algorithm}[t]
\caption{GG-PA with Diffusion-Time Replica Exchange (GG-PA-RE)}
\label{alg:ggpa_re}
\begin{algorithmic}[1]
\REQUIRE Context factors $\{q_{\mathrm{ctx}}^{(r)}\}_{r=1}^R$, pretrained diffusion priors $\{p_i\}_{i=1}^K$, projectors $\{\Phi_i\}_{i=1}^K$, diffusion-time ladder $0<t_1<\cdots<t_R<1$, sweeps $N$
\STATE Initialize augmented states $\mathbf{z}^{(0,r)}=(\mathbf{s}^{(0,r)},\{\mathbf{x}_i^{(0,r)}\}_{i=1}^K)$ for $r=1,\dots,R$
\FOR{$n = 1,2,\dots,N$}
    \FOR{$r = 1$ \TO $R$ \textbf{in parallel}}
        \FOR{$i = 1$ \TO $K$ \textbf{in parallel}}
            \STATE Sample $\mathbf{x}_i^{(n,r)} \sim p_i\!\left(\mathbf{x}_i \mid \Phi_i(\mathbf{s}^{(n-1,r)}), t_r\right)$
        \ENDFOR
        \STATE Sample $\mathbf{s}^{(n,r)}$ under
        \[
        U_{\mathrm{eff}}^{(r)}(\mathbf{s};\{\mathbf{x}_i^{(n,r)}\})
        =
        -\log q_{\mathrm{ctx}}^{(r)}(\mathbf{s})
        -
        \sum_{i=1}^K
        \log q_{t_r}^{(i)}\!\left(\Phi_i(\mathbf{s})\mid \mathbf{x}_i^{(n,r)}\right)
        \]
    \ENDFOR
    \STATE Set $\mathcal{P}_n=\{1,3,5,\dots\}$ if $n$ is odd, and $\mathcal{P}_n=\{2,4,6,\dots\}$ if $n$ is even
    \FOR{$r \in \mathcal{P}_n$ with $r<R$}
        \STATE Compute $\alpha_{r,r+1}$ using Eq.~\eqref{eq:si_re_accept_compact}
        \STATE With probability $\alpha_{r,r+1}$, swap $\mathbf{z}^{(n,r)}$ and $\mathbf{z}^{(n,r+1)}$
    \ENDFOR
\ENDFOR
\RETURN Samples from the target low-diffusion-time replica, optionally together with all replicas for reweighting
\end{algorithmic}
\end{algorithm}

To improve sample efficiency, we further combine replica exchange with the Multistate Bennett Acceptance Ratio (MBAR) estimator \cite{shirts2008statistically}. MBAR requires the reduced potential of every collected sample evaluated under every replica state. Because MBAR is invariant to additive terms that are shared across all replica states for a given sample, the replica-independent prior contribution
\(
-\sum_i \log p_i(\mathbf{x}_i)
\)
may be omitted. The reduced potential used for reweighting can therefore be written as
\begin{equation}
u_k^{\mathrm{MBAR}}(\mathbf{z})
=
-\log q_{\mathrm{ctx}}(\mathbf{s},t_k)
-
\sum_{i=1}^K
\log q_{t_k}^{(i)}\!\big(\Phi_i(\mathbf{s})\mid \mathbf{x}_i\big),
\label{eq:mbar_reduced_potential_multi}
\end{equation}
where the normalization constants of the forward kernels are included in $\log q_{t_k}^{(i)}$.

In practice, MBAR is effective when neighboring replicas exhibit sufficient phase-space overlap, which is typically reflected in moderate swap acceptance rates. Under this condition, samples from larger-$t$ replicas, which mix more rapidly, can be reweighted to improve the effective sample size of the target small-$t$ ensemble while preserving the correct equilibrium expectations.

Unless otherwise noted, MBAR is not applied in the experiments.

\subsection{Specialization to split Gibbs sampling for linear inverse problems}
\label{sec:si_linear_inverse}

The finite-time exactness construction in Proposition~2 applies directly to the standard noisy linear inverse problem studied in plug-and-play (PnP) and split Gibbs sampling methods \cite{venkatakrishnan2013plug,coeurdoux2024plug,wu2024principled}. We briefly spell out this specialization, as it identifies a covariance correction that removes the finite-noise bias present in existing split Gibbs samplers for this problem class.

Consider a single prior ($K=1$) with $\mathbf{x}\in\mathbb{R}^n$, a linear forward operator $\mathbf{H}\in\mathbb{R}^{m\times n}$, and a noisy observation
\begin{equation}
\mathbf{y} = \mathbf{H}\mathbf{x} + \boldsymbol{\eta},
\qquad
\boldsymbol{\eta}\sim\mathcal{N}(\mathbf{0},\boldsymbol{\Sigma}_\eta).
\label{eq:si_lip_obs}
\end{equation}
The target posterior is $p(\mathbf{x}\mid\mathbf{y})\propto p(\mathbf{x})\,\mathcal{N}(\mathbf{y};\mathbf{H}\mathbf{x},\boldsymbol{\Sigma}_\eta)$. Existing split Gibbs methods introduce an auxiliary variable $\mathbf{z}$ with joint target
\begin{equation}
\pi_\rho(\mathbf{x},\mathbf{z}\mid\mathbf{y})
\propto
p(\mathbf{x})\,
\mathcal{N}(\mathbf{y};\mathbf{H}\mathbf{z},\boldsymbol{\Sigma}_{\mathrm{split}})
\exp\!\left(-\tfrac{1}{2\rho^2}\|\mathbf{x}-\mathbf{z}\|^2\right),
\label{eq:si_lip_joint}
\end{equation}
and alternate between a denoising step that samples $\mathbf{x}\mid\mathbf{z}$ using the diffusion prior and a likelihood-consistency step that samples $\mathbf{z}\mid\mathbf{x},\mathbf{y}$ analytically. This corresponds to GG-PA with $K=1$, $\Phi=\mathrm{id}$, and an isotropic forward kernel $q_t(\mathbf{z}\mid\mathbf{x})=\mathcal{N}(\mathbf{z};\mathbf{x},\rho^2 I)$ (setting $\alpha_t=1$ for simplicity).

Marginalizing $\mathbf{z}$ yields
\begin{equation}
\pi_\rho(\mathbf{x}\mid\mathbf{y})
\propto
p(\mathbf{x})\,
\mathcal{N}\!\left(\mathbf{y};\mathbf{H}\mathbf{x},\,\boldsymbol{\Sigma}_{\mathrm{split}}+\rho^2 \mathbf{H}\mathbf{H}^\top\right).
\label{eq:si_lip_marginal}
\end{equation}
Standard implementations set $\boldsymbol{\Sigma}_{\mathrm{split}}=\boldsymbol{\Sigma}_\eta$, which inflates the effective measurement covariance to $\boldsymbol{\Sigma}_\eta+\rho^2 \mathbf{H}\mathbf{H}^\top$. The $\mathbf{x}$-marginal then differs from the true posterior for any $\rho>0$, and exactness requires $\rho\to 0$.

By direct application of Proposition~2, the choice
\begin{equation}
\boldsymbol{\Sigma}_{\mathrm{split}}
=
\boldsymbol{\Sigma}_\eta - \rho^2 \mathbf{H}\mathbf{H}^\top
\label{eq:si_lip_matched}
\end{equation}
yields $\pi_\rho(\mathbf{x}\mid\mathbf{y})\propto p(\mathbf{x})\,\mathcal{N}(\mathbf{y};\mathbf{H}\mathbf{x},\boldsymbol{\Sigma}_\eta)$ for every admissible $\rho$, so the $\mathbf{x}$-marginal is the exact posterior without requiring $\rho\to 0$.

The covariance is positive definite if and only if $\rho^2 \mathbf{H}\mathbf{H}^\top \prec \boldsymbol{\Sigma}_\eta$, which gives the admissible window
\begin{equation}
\rho < \rho_{\max}
\equiv
\left[
\lambda_{\max}\!\left(
\boldsymbol{\Sigma}_\eta^{-1/2}\mathbf{H}\mathbf{H}^\top\boldsymbol{\Sigma}_\eta^{-1/2}
\right)
\right]^{-1/2}.
\label{eq:si_lip_rhomax}
\end{equation}
In the white-noise case $\boldsymbol{\Sigma}_\eta=\sigma_\eta^2 I$, this simplifies to $\rho_{\max}=\sigma_\eta/\|\mathbf{H}\|_2$.

\paragraph{Mixing within the admissible window.}
Since $\rho$ no longer affects the stationary distribution, it controls only the Gibbs transition kernel and therefore the mixing rate. To make this precise, consider the scalar Gaussian case: $x\sim\mathcal{N}(0,\sigma_x^2)$, $y=hx+\eta$, $\eta\sim\mathcal{N}(0,\sigma_\eta^2)$, with $\rho_{\max}=\sigma_\eta/|h|$. Composing the two Gibbs steps yields an AR(1) chain $x^+ = \varphi_\rho\, x + c_\rho + \xi$, where the lag-one autocorrelation is
\begin{equation}
\varphi_\rho
=
\frac{\sigma_x^2}{\sigma_x^2+\rho^2}
\left(1-\frac{\rho^2}{\rho_{\max}^2}\right).
\label{eq:si_lip_ar1}
\end{equation}
One can verify that $d\varphi_\rho/d(\rho^2)<0$ for all admissible $\rho$, so that larger $\rho$ monotonically decreases the autocorrelation: $\varphi_\rho\to 1$ as $\rho\to 0$ (sticky chain) and $\varphi_\rho\to 0$ as $\rho\to\rho_{\max}$ (near-independent sampler, but numerically singular). This parallels the role of the diffusion time $t$ in the quadratic systems of the main text: within the admissible window, increasing the split noise improves exploration without sacrificing exactness, while values too close to the noise ceiling become numerically unstable. We verify these predictions on a two-dimensional Gaussian mixture inverse problem in Appendix~\ref{sec:si_gmm_verification}.

\section{Experimental Details}
\label{sec:experimental_details}

This appendix summarizes the experimental instantiations of GG-PA for the three systems studied in the main text. 
We first describe the conventions shared across systems, and then provide system-specific details.
\subsection{Common implementation and training conventions}
\label{sec:common_model_training}

For the coupled double-well and $\phi^4$ experiments, the pretrained priors share the same backbone architecture and diffusion parameterization. In both cases, the prior is a residual MLP with sinusoidal time embeddings, trained under a variance-preserving (VP) forward process with a cosine noise schedule and the standard $v$-parameterization objective. Unless otherwise stated, the backbone consists of $3$ residual blocks with hidden width $32$ and time-embedding dimension $16$, and is optimized with Adam. Both priors model scalar variables; the system-specific sections below specify the remaining hyperparameters, and sampling-time discretization choices.

For these VP priors, the GG-PA prior update is implemented by initializing the reverse diffusion process at the current noisy observation and integrating from the chosen diffusion time to $0$. The Direct Diffusion baseline uses the same reverse process, but is started from pure noise rather than from a system-dependent noisy observation.

Across all GG-PA experiments, one outer sweep consists of a prior update followed by an aggregation update that enforces consistency with the explicit context. In the coupled double-well and $\phi^4$ experiments, the aggregation update is exact because the forward kernel and the explicit context together define Gaussian conditional distributions over the noisy variables. In replica-exchange runs, neighboring swaps are attempted after each outer sweep using the Metropolis--Hastings rule derived in Appendix~\ref{sec:si_re_mbar}. Unless otherwise stated, reported observables are computed after discarding an initial burn-in period from each chain.

For both the coupled double-well and $\phi^4$ experiments, we work in dimensionless units with inverse temperature fixed to $\beta=1$. Accordingly, $\beta$ is absorbed into the system parameters below and is not written explicitly in the system-specific formulas.

\subsubsection{Statistical uncertainty and error bars}
We use experiment-specific uncertainty conventions depending on the observable and sampling protocol. For the coupled double-well system, error bars for the JS divergence and the IAT summarize variability across independent trajectories/chains at fixed diffusion time after burn-in. For the 2D $\phi^4$ results, error bars denote autocorrelation-corrected standard errors computed from the production time series using the integrated autocorrelation time of the corresponding observable. For the alanine dimer decorrelation diagnostics, error bars denote sample standard deviations across the independent trajectories after burn-in; unless otherwise noted in the figure captions, central values denote means or pooled estimates computed from the post-burn-in samples under the corresponding protocol.

\subsubsection{Compute resources and software}
Training and sampling were performed on local GPU workstations equipped with a single NVIDIA GeForce RTX 5080 GPU (16 GB). The software environment consisted of Python 3.11.13, PyTorch 2.6.0, OpenMM 8.3.1.dev-6e13f13, MDTraj 1.11.1, PyMBAR 4.2.0, NumPy 1.26.4, and SciPy 1.16.2.

Representative wall-clock times were measured on a single RTX 5080. Training each diffusion prior used in this work required less than 10 minutes. For production sampling without replica exchange, a typical run required approximately 5 min for the coupled double-well system, 2 min for one 2D $\phi^4$ production point with 10,000 recorded sweeps, 5 min for the AD--Na$^+$ system, and 15 min for the AD dimer system.

\subsection{Coupled double-well system}
\label{sec:double_well_details}

\subsubsection{System and target distribution}

The coupled double-well system is defined by the total potential
\begin{equation}
U_{\mathrm{tot}}(x_{\mathrm{sys}},x_{\mathrm{env}})
=
\underbrace{A(x_{\mathrm{sys}}^2-1)^2}_{U_{\mathrm{sys}}(x_{\mathrm{sys}})}
+
\underbrace{\frac{k_c}{2}(x_{\mathrm{sys}}-x_{\mathrm{env}})^2}_{U_{\mathrm{int}}(x_{\mathrm{sys}},x_{\mathrm{env}})}
+
\underbrace{\frac{k_b}{2}(x_{\mathrm{env}}-u_{\mathrm{eq}})^2}_{U_{\mathrm{env}}(x_{\mathrm{env}})}.
\end{equation}
We use $A=8$, $k_c=4$, $k_b=1$, and $u_{\mathrm{eq}}=1$.

As a physical baseline, we simulate overdamped Langevin dynamics using the Euler--Maruyama update
\begin{equation}
\mathbf{x}_{n+1}
=
\mathbf{x}_n
-
\nabla U_{\mathrm{tot}}(\mathbf{x}_n)\,\Delta t
+
\sqrt{2\Delta t}\,\boldsymbol{\eta}_n,
\qquad
\boldsymbol{\eta}_n\sim\mathcal{N}(\mathbf{0},I),
\end{equation}
with $\mathbf{x}_n=(x_{\mathrm{sys},n},x_{\mathrm{env},n})$. We run $5$ independent trajectories initialized at $(x_{\mathrm{sys}},x_{\mathrm{env}})=(1.0,1.0)$ with integration timestep $\Delta t=5\times 10^{-3}$. Each trajectory records $25{,}000$ samples with a recording interval of $1.0$ in simulation time (every $200$ integration steps), and the first $20\%$ of samples are discarded as burn-in.

To quantify mixing, we compute the integrated autocorrelation time (IAT) of the basin indicator
\begin{equation}
A_n=\mathrm{sgn}(x_{\mathrm{sys},n}).
\end{equation}
The IAT is estimated using the statistical inefficiency implementation in \texttt{pymbar} \cite{shirts2008statistically}, and reported values are averaged over independent chains.

\subsubsection{GG-PA and replica-exchange setup}

The diffusion prior acts only on the system coordinate $x_{\mathrm{sys}}$, while $x_{\mathrm{env}}$ remains part of the explicit physical context. Because the interaction term is quadratic, the finite-$t$ context schedule can be constructed in closed form by Gaussian deconvolution of the physical coupling against the VP forward kernel
\begin{equation}
q_t(s\mid x_{\mathrm{sys}})
=
\mathcal{N}\!\left(s;\alpha(t)x_{\mathrm{sys}},\sigma^2(t)\right).
\end{equation}
The resulting exact finite-$t$ context schedule may be written as
\begin{equation}
U_{\mathrm{ctx}}(s,x_{\mathrm{env}};t)
=
\frac{1}{2}\kappa_t\big(s-\alpha(t)x_{\mathrm{env}}\big)^2
+
\frac{k_b}{2}(x_{\mathrm{env}}-u_{\mathrm{eq}})^2,
\end{equation}
with effective precision
\begin{equation}
\kappa_t=
\frac{k_c}{\alpha^2(t)-k_c\sigma^2(t)}.
\end{equation}
This construction is valid only when
\begin{equation}
\alpha^2(t)-k_c\sigma^2(t)>0.
\end{equation}
Under the VP parameterization $\sigma^2(t)=1-\alpha^2(t)$, this becomes
\begin{equation}
\alpha^2(t)>\frac{k_c}{1+k_c}.
\end{equation}
For $k_c=4$, this yields $\alpha^2(t)>0.8$, corresponding to a maximal valid diffusion time of approximately $t_{\max}\approx 0.28$ under the cosine schedule.

In fixed-$t$ GG-PA runs with $t\le t_{\max}$, including the production settings $t\in\{0.1,0.2\}$, we use the exact finite-$t$ context schedule above. For larger diffusion times in fixed-$t$ runs, where the deconvolved Gaussian becomes ill-posed, we revert to the unannealed physical context
\begin{equation}
U_{\mathrm{ctx}}(s,x_{\mathrm{env}})
=
\frac{k_c}{2}(s-x_{\mathrm{env}})^2
+
\frac{k_b}{2}(x_{\mathrm{env}}-u_{\mathrm{eq}})^2.
\end{equation}

Given the denoised clean variable from the prior update, the aggregation step samples the noisy system variable and the environment coordinate jointly from the exact conditional distribution induced by the effective potential above. Because both the forward kernel and the context are quadratic, this conditional is Gaussian and the aggregation step is implemented exactly. Production runs use $250$ parallel chains for $500$ outer sweeps.

To improve mixing of the small-$t$ target ensemble, we use replica exchange over diffusion time with the ladder
\begin{equation}
t\in\{0.1,\,0.2,\,0.4,\,0.8\}.
\end{equation}
In these replica-exchange runs, all replicas share the exact finite-$t$ context schedule corresponding to the production time $t_{\mathrm{prod}}=0.1$, and differ only in the forward-process noise level used in the prior update.

\subsubsection{Prior model}

The prior is trained on the isolated one-dimensional system corresponding to $k_c=0$,
\begin{equation}
U_{\mathrm{sys}}(x_{\mathrm{sys}})=8(x_{\mathrm{sys}}^2-1)^2.
\end{equation}
The score model uses the shared residual MLP architecture described in Appendix~\ref{sec:common_model_training}. We use a VP forward process with a cosine noise schedule over $T=1000$ discrete timesteps, bounded by $\beta_{\min}=10^{-4}$ and $\beta_{\max}=0.02$, and train the model using the $v$-parameterization.

\subsubsection{Training details}

The training set consists of $100{,}000$ samples generated by direct rejection sampling from $\exp[-U_{\mathrm{sys}}(x_{\mathrm{sys}})]$. To enforce exact reflection symmetry, we augment the dataset by applying $x_{\mathrm{sys}}\mapsto -x_{\mathrm{sys}}$, yielding $200{,}000$ training samples in total.

Training uses an SNR-weighted mean-squared loss with weight factor $5.0$, Adam optimizer, batch size $256$, initial learning rate $10^{-3}$, and cosine annealing with $\eta_{\min}=10^{-5}$ for $100$ epochs.

\subsection{$\phi^4$ lattice model}
\label{sec:phi4_details}

\subsubsection{System and target distribution}

In the main text, the 2D $\phi^4$ model is defined by the Hamiltonian
\begin{equation}
H(\phi)
=
\sum_i(\phi_i^2-1)^2
+
J\sum_{\langle i,j\rangle}(\phi_i-\phi_j)^2
-
h\sum_i\phi_i,
\label{eq:si_phi4_hamiltonian}
\end{equation}
on a periodic $L\times L$ square lattice. Expanding the interaction term gives
\begin{equation}
H(\phi)
=
\sum_i \left[\phi_i^4 + (4J-2)\phi_i^2\right]
-
2J\sum_{\langle i,j\rangle}\phi_i\phi_j
-
h\sum_i\phi_i,
\end{equation}
which is equivalent to the standard lattice $\phi^4$ form
\begin{equation}
H_{\mathrm{std}}(\phi)
=
-K\sum_{\langle i,j\rangle}\phi_i\phi_j
+
\sum_i\left(r\phi_i^2+u\phi_i^4-h\phi_i\right),
\end{equation}
with
\begin{equation}
u=1,\qquad K=2J,\qquad r=4J-2.
\end{equation}

All simulations are performed on a periodic $32\times 32$ lattice. For the zero-field study, we set $h=0$ and scan $J$ across the phase-transition region. For the finite-field scaling analysis, we use
\begin{equation}
h\in\{0.008,\,0.012,\,0.015,\,0.02,\,0.025\},
\end{equation}
and scan $J$ in a scaling window centered at $J_c=0.436$ using the scaling variable
\begin{equation}
x=\frac{J-J_c}{h^{8/15}}.
\end{equation}
For each field value, we use a uniform grid of $10$ points in $x\in[-2,2]$, clipped to the range $J\in[0.10,0.80]$.

As a reference, we use vectorized checkerboard Metropolis Monte Carlo \cite{metropolis1953equation} with proposals
\begin{equation}
\delta \sim \mathrm{Uniform}(-0.5,\,0.5),
\end{equation}
applied on alternating checkerboard masks. For the main zero-field scan, we use $50{,}000$ equilibration sweeps and $150{,}000$ measurement sweeps per point.

Unless otherwise stated, each GG-PA production run uses $10{,}000$ recorded sweeps, magnetization is recorded every sweep, and the first $30\%$ of samples are discarded as burn-in. In the critical region, selected scan points are extended with longer runs to ensure enough effective samples.

\subsubsection{GG-PA and replica-exchange setup}

Here the clean variable is the denoised field $\phi$, and the noisy variable is the auxiliary field $\psi$. Because the inter-site coupling is quadratic, the aggregation step again reduces to exact Gaussian sampling.

We choose the finite-$t$ context schedule factor
\begin{equation}
q_{\mathrm{ctx}}(\psi;t)
\propto
\exp\!\left[
-\frac{1}{2}\psi^\top Q(t)\psi
+
\frac{h}{\alpha(t)}\sum_i \psi_i
\right].
\label{eq:si_phi4_ctx}
\end{equation}
Under periodic boundary conditions, $Q(t)$ is diagonal in Fourier space. If $\lambda_k$ denotes the eigenvalue of the discrete lattice Laplacian at mode $k$, then for nonzero modes
\begin{equation}
q_k(t)
=
\frac{2J\lambda_k}{\alpha^2(t)-2J\lambda_k\,\sigma^2(t)},
\qquad
q_0(t)=0.
\end{equation}
Therefore, exact finite-time matching requires
\begin{equation}
\alpha^2(t)-2J\lambda_k\,\sigma^2(t)\ge 0.
\end{equation}
Since $\lambda_k\in[0,8]$ on the 2D periodic square lattice, the most restrictive condition is
\begin{equation}
\alpha^2(t)-16J\,\sigma^2(t)\ge 0,
\label{eq:si_phi4_validity}
\end{equation}
or equivalently,
\begin{equation}
\alpha^2(t)\ge \frac{16J}{1+16J}
\end{equation}
under the VP parameterization $\sigma^2(t)=1-\alpha^2(t)$.

Given a denoised clean field $\phi$, the aggregation step samples
\begin{equation}
p(\psi\mid \phi,t)\propto q_t(\psi\mid \phi)\,q_{\mathrm{ctx}}(\psi;t),
\end{equation}
which is Gaussian in $\psi$. Because both factors are diagonal in Fourier space, this conditional can be sampled exactly mode by mode. For each Fourier mode,
\begin{equation}
A_k=\sigma^{-2}(t)+q_k(t),
\qquad
v_k=A_k^{-1},
\end{equation}
and the conditional mean is
\begin{equation}
\hat{\mu}_k
=
v_k\left(\frac{\alpha(t)}{\sigma^2(t)}\hat{\phi}_k+\hat{b}_k\right),
\end{equation}
where $\hat{b}_k=0$ for $k\neq 0$ and $\hat{b}_0=hL/\alpha(t)$ under the orthonormal FFT convention. We then sample
\begin{equation}
\hat{\psi}_k=\hat{\mu}_k+\sqrt{v_k}\,\hat{z}_k
\end{equation}
and transform back to real space by inverse FFT. This gives an exact aggregation step for the quadratic context.

To mitigate critical slowing down, we use replica exchange over diffusion time. In these runs, the context precision $Q$ is fixed at the production diffusion time $t_{\mathrm{prod}}=0.1$, while the forward-process noise level varies across replicas. We use $R=48$ replicas with geometrically spaced diffusion times between
\begin{equation}
t_{\mathrm{prod}}=0.1
\qquad\text{and}\qquad
t_{\max}=0.6,
\end{equation}
which yields stable mixing and typical swap acceptance rates of approximately $30\%$.

\subsubsection{Prior model}

The prior is trained on the one-dimensional on-site distribution
\begin{equation}
p(\phi)\propto \exp[-(\phi^2-1)^2].
\end{equation}
The score model uses the same residual MLP backbone as the double-well prior, with the shared architecture described in Appendix~\ref{sec:common_model_training}. We again use a VP forward process with a cosine noise schedule and the standard $v$-parameterization. The checkpoint used in the lattice experiments employs $T=200$ discrete diffusion steps to accelerate sampling.

\subsubsection{Training details}

Training data are generated by direct rejection sampling from the on-site target distribution. The model is trained with an SNR-weighted mean-squared loss using Adam with learning rate $10^{-3}$.

Because the lattice experiments operate primarily in the small-noise regime and the system is in a near critical condition, we notice that the quality of the checkpoint is crucial for the results. We choose the checkpoint by comparing the variance of the posterior denoised samples against the exact one-dimensional posterior
\begin{equation}
p(\phi\mid \psi)
\propto
\exp[-(\phi^2-1)^2]\,
\mathcal{N}(\psi;\alpha(t)\phi,\sigma^2(t)),
\end{equation}
and select the model whose variance ratio is closest to $1$ in this regime.
\subsection{Alanine dipeptide systems}
\label{sec:ad_details}

\subsubsection{Shared simulation and sampling setup}

All molecular dynamics simulations for the alanine dipeptide (AD) systems are performed in OpenMM \cite{eastman2023openmm} on the CUDA platform. Unless otherwise noted, we use the force-field files \texttt{amber99sbildn.xml} \cite{hornak2006comparison,lindorff2010improved} and \texttt{tip3p.xml} \cite{jorgensen1983comparison}, the NVT ensemble, a \texttt{LangevinMiddleIntegrator} at $300$ K with friction coefficient $1.0~\mathrm{ps}^{-1}$, timestep $2~\mathrm{fs}$, \texttt{NoCutoff} nonbonded interactions, and \texttt{HBonds} constraints.

In the AD systems, the full-system state remains a full atomistic MD configuration, while the diffusion prior acts only on the projected backbone torsions of each alanine dipeptide. For a system containing $K$ alanine dipeptides, we extract
\begin{equation}
\Phi_i(\mathbf{s})=(\phi_i,\psi_i),
\qquad
i=1,\dots,K.
\end{equation}
At the beginning of each GG-PA sweep, the current projected torsions are used as noisy inputs to the pretrained torsion diffusion model at diffusion time $t$, and reverse diffusion is run independently for each molecule to obtain denoised torsion anchors
\begin{equation}
\mathbf{x}_i=(\phi_i^\ast,\psi_i^\ast),
\qquad
i=1,\dots,K.
\end{equation}
For the dimer system, the same monomer prior is applied independently to the two monomers.

In all alanine experiments, we do not anneal the explicit physical context, i.e.
\begin{equation}
q_{\mathrm{ctx}}(\mathbf{s},t)=\pi_{\mathrm{ctx}}(\mathbf{s}),
\end{equation}
so the aggregation target is
\begin{equation}
\pi_t(\mathbf{s}\mid \{\mathbf{x}_i\})
\propto
\pi_{\mathrm{ctx}}(\mathbf{s})
\prod_{i=1}^K q_t^{(i)}\!\big(\Phi_i(\mathbf{s})\mid \mathbf{x}_i\big),
\end{equation}
with corresponding effective potential
\begin{equation}
U_{\mathrm{eff}}(\mathbf{s};\{\mathbf{x}_i\},t)
=
-\log \pi_{\mathrm{ctx}}(\mathbf{s})
-
\sum_{i=1}^K \log q_t^{(i)}\!\big(\Phi_i(\mathbf{s})\mid \mathbf{x}_i\big).
\end{equation}
Operationally, the aggregation step is implemented in OpenMM by adding wrapped torsional restraints centered at the denoised anchors and running restrained MD under the effective potential above.

Unless otherwise noted, all alanine runs use production diffusion time $t=0.1$, $100$ restrained-MD steps per outer sweep, $5000$ outer sweeps per trajectory, and $5$ independent trajectories. The first $20\%$ of each trajectory is discarded during analysis. Each recorded sweep therefore corresponds to
\begin{equation}
100\times 2~\mathrm{fs}=200~\mathrm{fs}.
\end{equation}
To reduce double counting between the torsion prior and the explicit atomistic context, the runs remove the internal dihedral and nonbonded interactions associated with the atoms entering the modeled backbone dihedrals.

\subsubsection{Prior model}

For the alanine dipeptide systems, the diffusion prior is trained only on the backbone torsion angles $(\phi,\psi)$. Because these variables live on a torus, we use a wrapped-Gaussian forward process \cite{de2022riemannian, jing2022torsional,huang2022riemannian}. For a single alanine dipeptide, the clean torsion vector is $\mathbf{x}_0\in[-\pi,\pi)^d$ with $d=2$, corresponding to the pair $(\phi,\psi)$. The forward process is
\begin{equation}
\mathbf{x}_t
=
\mathrm{wrap}\!\left(\mathbf{x}_0+\sigma_t\boldsymbol{\epsilon}\right),
\qquad
\boldsymbol{\epsilon}\sim\mathcal{N}(\mathbf{0},I_d),
\end{equation}
where $\mathrm{wrap}(\cdot)$ maps each angle to $[-\pi,\pi)$ and
\begin{equation}
\sigma_t
=
\sigma_{\min}\left(\frac{\sigma_{\max}}{\sigma_{\min}}\right)^t,
\qquad
\sigma_{\min}=0.1,\quad \sigma_{\max}=3.0.
\end{equation}
The corresponding conditional density is the wrapped Gaussian
\begin{equation}
q_t(\mathbf{x}_t\mid \mathbf{x}_0)
=
\sum_{\mathbf{m}\in\mathbb{Z}^d}
\mathcal{N}\!\left(\mathbf{x}_t;\mathbf{x}_0+2\pi\mathbf{m},\sigma_t^2 I_d\right).
\end{equation}

The same wrapped-Gaussian kernel is used in the aggregation step as a torsional restraint, so that the diffusion prior and the MD-based aggregation update are defined with respect to a consistent corruption model.

The score network uses the same residual-MLP design principle as above, but adapted to periodic angular inputs. Each torsion pair is represented by its sine and cosine,
\begin{equation}
(\phi,\psi)\mapsto (\sin\phi,\cos\phi,\sin\psi,\cos\psi),
\end{equation}
which removes the discontinuity at $\pm\pi$. We use hidden width $64$, $3$ hidden layers, dropout rate $0.1$, and sinusoidal time embeddings.

Training uses denoising score matching with diffusion times sampled uniformly from $[0,1]$. For each clean sample $\mathbf{x}_0$, we generate $\mathbf{x}_t$ using the wrapped forward process above and compute the exact wrapped-Gaussian score
\begin{equation}
\mathbf{s}^\star(\mathbf{x}_t,\mathbf{x}_0,t)
=
\nabla_{\mathbf{x}_t}\log q_t(\mathbf{x}_t\mid \mathbf{x}_0).
\end{equation}
In practice, the wrapped density and its score are evaluated using a truncated sum over winding numbers with $m_j\in\{-2,\dots,2\}$ for each angular dimension $j=1,\dots,d$, and the torsion restraint in OpenMM simulation use the same truncated sum.

Rather than predicting the score directly, the network predicts the preconditioned target $\sigma_t \mathbf{s}^\star$, which improves stability across noise scales. Denoting the network output by $\mathbf{f}_\theta(\mathbf{x}_t,t)$, the training objective is
\begin{equation}
\mathcal{L}(\theta)
=
\mathbb{E}_{\mathbf{x}_0,t,\mathbf{x}_t}
\left[
\left\|
\mathbf{f}_\theta(\mathbf{x}_t,t)
-
\sigma_t \mathbf{s}^\star(\mathbf{x}_t,\mathbf{x}_0,t)
\right\|_2^2
\right].
\end{equation}
We train the model using AdamW with learning rate $10^{-3}$, weight decay $10^{-5}$, batch size $256$, gradient clipping at $1.0$, and cosine annealing for $100$ epochs. The reverse process is discretized using $T=200$ steps in the alanine experiments.

\subsubsection{AD--Na$^+$ system}
\label{subsec:ad_sodium_system}

To prevent complete ion escape in vacuum, we apply a weak harmonic restraint between the AD C$_\alpha$ atom and the Na$^+$ ion. The restraint has equilibrium distance $0.4$ nm and force constant $50.0~\mathrm{kJ\,mol^{-1}\,nm^{-2}}$.

The curated reference data consist of a vacuum reference with $32{,}000$ frames and an ion-coupled reference with $100{,}000$ frames. The main observables are the noisy physical Ramachandran angles $(\phi,\psi)$ and the carbonyl O--O distance. The O--O distance is defined as the distance between the two carbonyl oxygen atoms from residues \texttt{ACE} and \texttt{ALA}. The Ramachandran maps and O--O distribution curves reported for this system are computed from the recorded physical trajectories rather than from the denoised torsion anchors.

\subsubsection{AD dimer system}
\label{subsec:ad_dimer_system}

To prevent trivial dissociation, the dimer runs use an always-on flat-bottom center-of-mass restraint with force constant $50.0~\mathrm{kJ\,mol^{-1}\,nm^{-2}}$ and flat region up to $d_0=0.9$ nm.

In the main experiments, we use the replica-exchange version with a $5$-replica ladder
\begin{equation}
t\in\{0.1,\,0.15,\,0.25,\,0.4,\,0.8\},
\end{equation}
where the production replica is at $t=0.1$. Neighboring swaps are attempted after every outer sweep. The MD reference consists of a $100$ ns trajectory with $50{,}000$ recorded frames at $2$ ps per frame.

To describe the symmetry-broken dimer organization, we assign each frame to one of the nine coarse-grained states
\begin{equation}
\{A_{LL},A_{LR},A_{RL},A_{RR},P_{LL},P_{LR},P_{RL},P_{RR},U\},
\end{equation}
where $A$ and $P$ denote anti-parallel and parallel topologies. The monomer basin labels are defined directly in Ramachandran space by two reference-centered basins,
\begin{equation}
\mathbf{c}_L=(-2.4234,\,2.6856),\quad r_L=0.5938,
\qquad
\mathbf{c}_R=(-1.4261,\,1.1006),\quad r_R=0.7878.
\end{equation}
A monomer is assigned to \(L\) or \(R\) if its \((\phi,\psi)\) coordinates lie within the corresponding basin and are closer to that basin center than to the other one; otherwise the monomer is treated as off-basin for the dimer-state assignment. A dimer frame is assigned to \(U\) if it fails the contact, topology, or monomer-basin criteria.

For each dimer frame, we first define a backbone-orientation vector for each monomer. Within the atom list of monomer $m$, we identify the first atom named \texttt{C} and the last atom named \texttt{N}, and set
\begin{equation}
\mathbf{v}_m=\mathbf{r}_{N,\mathrm{last}}^{(m)}-\mathbf{r}_{C,\mathrm{first}}^{(m)},
\qquad m=1,2.
\end{equation}
The relative backbone orientation is quantified by the cosine score
\begin{equation}
\cos\theta
=
\frac{\mathbf{v}_1\cdot \mathbf{v}_2}
{\|\mathbf{v}_1\|\,\|\mathbf{v}_2\|+10^{-12}},
\end{equation}
so that $\cos\theta\approx +1$ corresponds to parallel alignment and $\cos\theta\approx -1$ to antiparallel alignment.

Intermolecular hydrogen bonds are evaluated from the atomistic trajectory using \texttt{mdtraj.baker\_hubbard} \cite{mcgibbon2015mdtraj,baker1984hydrogen} with \texttt{periodic=False}. We retain only intermolecular donor--hydrogen--acceptor triplets, i.e. triplets for which the donor and acceptor atoms belong to different monomers. A hydrogen bond is counted as present when
\begin{equation}
r_{d,a}<0.35~\mathrm{nm},
\qquad
\angle(d,h,a)>120^\circ,
\end{equation}
where $r_{d,a}$ is the donor--acceptor distance. The framewise hydrogen-bond count $N_{\mathrm{HB}}$ is the total number of active intermolecular donor--hydrogen--acceptor triplets in that frame.

In addition, we define a reciprocal-contact condition requiring that both monomers simultaneously donate at least one intermolecular hydrogen bond to the other. Equivalently, a frame satisfies the reciprocal-contact condition if it contains at least two active intermolecular hydrogen bonds and the active donor atoms include at least one donor from each monomer.

A frame is treated as structurally assigned only if
\begin{equation}
|\cos\theta| \ge 0.8,
\qquad
N_{\mathrm{HB}} \ge 2,
\end{equation}
and the reciprocal-contact condition is satisfied. Among such frames, we assign the topology label by
\begin{equation}
A \text{ if } \cos\theta \le -0.8,
\qquad
P \text{ if } \cos\theta \ge 0.8,
\end{equation}
and assign all remaining configurations to $U$.

In the occupancy matrices shown in the main text, the $8$ structured states are reported as conditional occupancies among assigned frames, while the unassigned fraction is reported separately. Unless otherwise stated, the dimer analyses are performed on the noisy physical trajectories rather than on the clean denoised torsion anchors.

\section{Additional Results}
\subsection{Verification of covariance-matched split Gibbs on a Gaussian mixture inverse problem}
\label{sec:si_gmm_verification}

To verify the covariance-matched construction derived in Appendix~\ref{sec:si_linear_inverse}, we consider a controlled two-dimensional linear inverse problem with a Gaussian mixture prior. We compare two split Gibbs variants: the \emph{matched} sampler, which uses the corrected split likelihood covariance $\boldsymbol{\Sigma}_{\mathrm{split}}=\boldsymbol{\Sigma}_\eta - \rho^2\mathbf{H}\mathbf{H}^\top$ (Eq.~\eqref{eq:si_lip_matched}), and the \emph{unmatched} sampler, which uses the standard uncorrected choice $\boldsymbol{\Sigma}_{\mathrm{split}}=\boldsymbol{\Sigma}_\eta$.

\paragraph{Setup.}
The prior is a 25-component isotropic Gaussian mixture on $\mathbb{R}^2$ with means on a $5\times 5$ grid $\{-5,-2.5,0,2.5,5\}^2$ and component standard deviation $\sigma_x=0.8$. The forward operator is $\mathbf{H}=[1,\;0]$, so the observation $y=\mathbf{H}x+\eta$ constrains only the first coordinate $x_1$. We set $y=0$ and $\sigma_\eta=2.5$, giving $\rho_{\max}=\sigma_\eta/\|\mathbf{H}\|_2=2.5$. We parameterize the split noise by the normalized level $r=\rho/\rho_{\max}\in(0,1)$. For each method and each value of $r$, we run multiple independent Gibbs chains and pool the post-burn-in samples.

\paragraph{Results.}
Figure~\ref{fig:si_gmm} summarizes the comparison. The top row displays the exact analytic posterior (a) alongside Monte Carlo density estimates from the matched (b) and unmatched (c) samplers at $r=0.95$. Because the observation constrains only $x_1$, the exact posterior concentrates on the central columns of the mixture grid. The matched sampler reproduces this column structure, whereas the unmatched sampler spreads more probability mass into the side columns at $x_1=\pm 5$, reflecting the covariance inflation $\Sigma_{\mathrm{eff}}=\Sigma_\eta+\rho^2\mathbf{H}\mathbf{H}^\top$ identified in Eq.~\eqref{eq:si_lip_marginal}. The one-dimensional marginal of $\mathbf{H}x=x_1$ (d) confirms this: the matched sampler closely tracks the exact posterior density, while the unmatched sampler produces a broader distribution with elevated tails.

Across the admissible window (e), the matched sampler maintains low JS divergence that decreases with $r$, consistent with an exact target and improving chain mixing at larger split noise. At the smallest $r$, both methods show comparable JS values; this is attributable to finite-chain mixing error rather than target bias, as corroborated by the high integrated autocorrelation times at small $r$ (f). In contrast, the unmatched sampler shows monotonically increasing JS divergence as $r$ grows, reflecting the growing target bias from covariance inflation. The IAT of $\mathbf{H}x$ (f) decreases with $r$ for both methods, confirming that larger split noise accelerates mixing. Together, these results demonstrate the practical benefit of covariance matching: one can choose $r$ near $\rho_{\max}$ to improve sampling efficiency while retaining an exact posterior target, whereas the unmatched sampler faces an inherent accuracy--efficiency trade-off.

\begin{figure}[hbt]
    \centering
    \includegraphics[width=\linewidth]{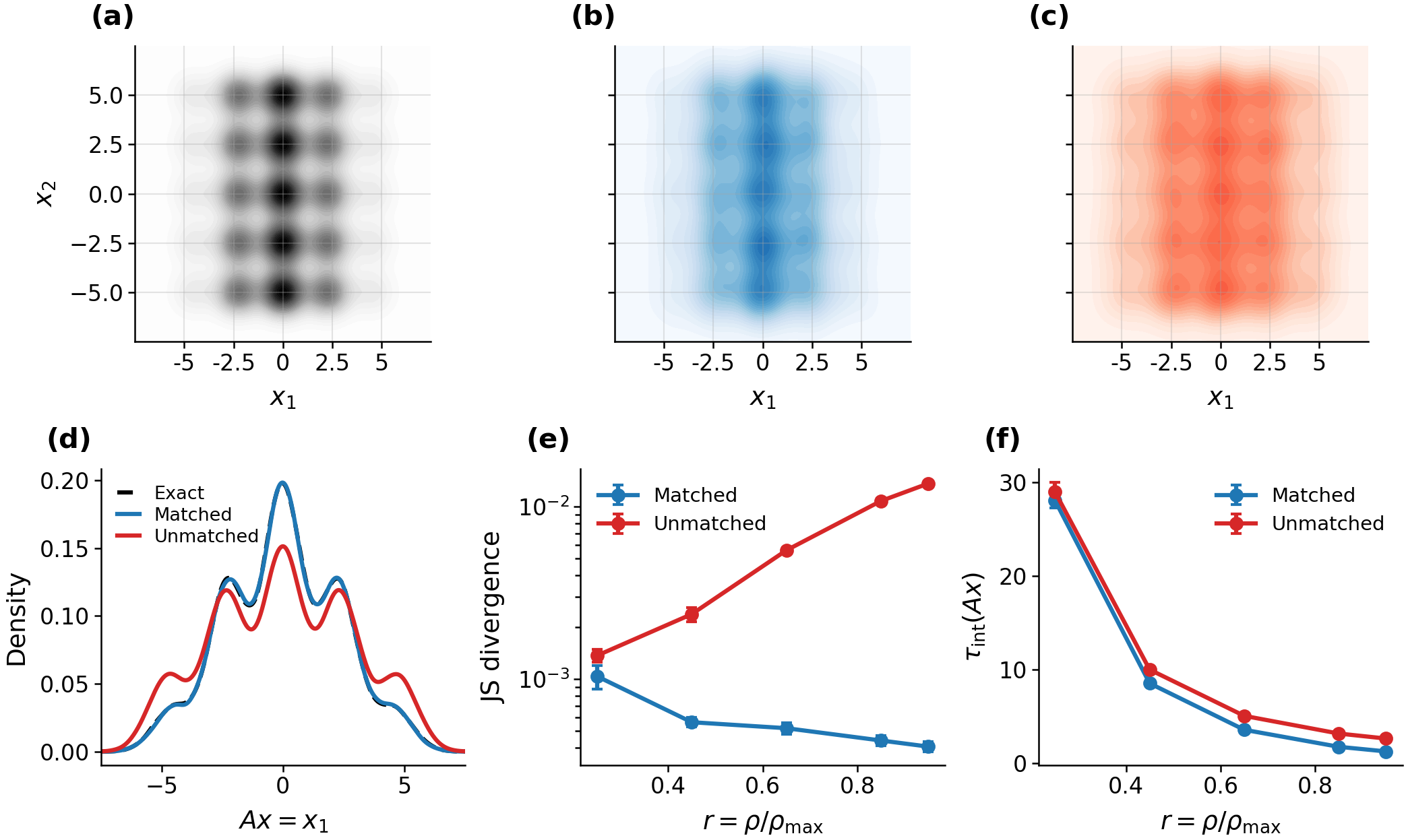}
    \caption{
    \textbf{Covariance-matched vs.\ unmatched split Gibbs sampling on a 2D Gaussian mixture inverse problem.}
    The matched sampler uses the corrected covariance $\boldsymbol{\Sigma}_{\mathrm{split}}=\boldsymbol{\Sigma}_\eta - \rho^2\mathbf{H}\mathbf{H}^\top$; the unmatched sampler uses $\boldsymbol{\Sigma}_{\mathrm{split}}=\boldsymbol{\Sigma}_\eta$.
    Top row: (a)~Exact analytic posterior $p(x\mid y)$. (b)~Monte Carlo density estimate from the matched sampler at $r=0.95$. (c)~Monte Carlo density estimate from the unmatched sampler at $r=0.95$; the broader spread along $x_1$ reflects the inflated effective likelihood covariance.
    Bottom row: (d)~KDE of the observed coordinate $\mathbf{H}x=x_1$ at $r=0.95$: the matched sampler (blue) closely follows the exact posterior (black dashed), while the unmatched sampler (red) assigns excess mass to the side columns.
    (e)~JS divergence of the $\mathbf{H}x$ marginal versus normalized split noise $r$. The matched sampler maintains low divergence that decreases with $r$; the residual error at small $r$ is consistent with finite-chain mixing (see panel~f) rather than target bias.
    (f)~Integrated autocorrelation time of $\mathbf{H}x$ versus $r$; both methods mix faster at larger $r$.
    }
    \label{fig:si_gmm}
\end{figure}

\subsection{Additional results for the 2D $\phi^4$ model}
\label{sec:phi4_additional_results}

This appendix provides additional empirical and analytical support for the replica-exchange results in the 2D $\phi^4$ model. We first show that, along the diffusion-time replica ladder, the system undergoes a clear noise-induced transition from the ordered to the disordered regime. We then give two complementary interpretations of this behavior under the fixed-$Q$ construction used in our experiments: after marginalizing the auxiliary field, increasing generative noise weakens the effective long-wavelength inter-site coupling; after marginalizing the clean field, the same noise smooths the local double-well landscape seen by the auxiliary variables for $h=0$.

\subsubsection{Noise-induced phase transitions along the replica ladder}
\label{sec:re_phase_transition}

In the fixed-$Q$ replica-exchange construction described in Appendix~\ref{sec:phi4_details}, all replicas share the same context term anchored at the production time $t_*=t_{\mathrm{prod}}$, while the forward-process noise level varies across the ladder. A convenient replica coordinate is the effective generative noise
\begin{equation}
\tilde{\sigma}_r = \frac{\sigma_r}{\alpha_r},
\end{equation}
which directly controls the amount of local blurring induced by the forward process.

Figure~\ref{fig:re_phase} shows the thermodynamic behavior of the clean field $\phi$ measured across the replica ladder. As $\tilde{\sigma}_r$ increases, the replicas traverse a sharp crossover from an ordered regime with nonzero magnetization to a disordered regime with vanishing order parameter. At the same time, the susceptibility exhibits a pronounced peak, indicating a noise-induced transition along the ladder. Panel~(c) further maps the resulting phase boundary in the $(J,\tilde{\sigma})$ plane. These results show that the high-noise replicas mix rapidly not merely because they are algorithmically noisier, but because they occupy an effectively softened physical ensemble.

\begin{figure}[hbt]
    \centering
    \includegraphics[width=\linewidth]{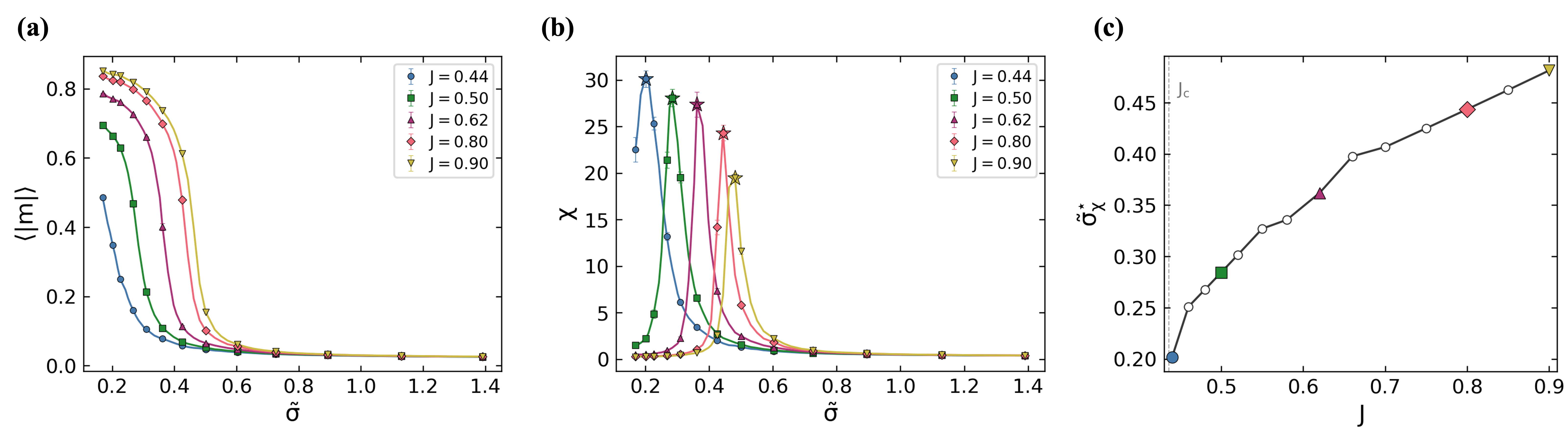}
    \caption{
    \textbf{Noise-induced phase transitions along the replica-exchange ladder.}
    (a) Order parameter $\langle |m_\phi| \rangle$ and (b) susceptibility $\chi_\phi$ measured across diffusion replicas, parameterized by the effective generative noise $\tilde{\sigma}_r=\sigma_r/\alpha_r$. As the noise level increases along the ladder, the replicas cross from the ordered regime into the disordered regime. (c) The extracted phase boundary $\tilde{\sigma}_c^\ast$ as a function of the physical coupling $J$, defining the softening trajectory traversed by the fixed-$Q$ replica-exchange sampler.
    }
    \label{fig:re_phase}
\end{figure}

\subsubsection{Dual physical interpretations of fixed-$Q$ replica exchange}
\label{sec:si_dual_perspective}

The empirical transition in Fig.~\ref{fig:re_phase} admits two complementary analytical interpretations. For the $r$-th replica in the fixed-$Q$ construction, the joint distribution of the clean field $\phi$ and auxiliary noisy field $\psi$ is
\begin{equation}
\Pi_r(\phi,\psi)
\propto
\left[\prod_i p_0(\phi_i)\right]
\mathcal{N}(\psi;\alpha_r\phi,\sigma_r^2 I)
\exp\!\left(-\frac{1}{2}\psi^\top Q_\ast \psi\right),
\label{eq:si_phi4_dual_joint}
\end{equation}
where $p_0(\phi_i)\propto \exp[-(\phi_i^2-1)^2]$ is the on-site double-well prior and the fixed precision matrix $Q_\ast$ is anchored at the production replica $t_\ast=t_{\mathrm{prod}}$. In Fourier space, its eigenvalues are
\begin{equation}
q_k^\ast=\frac{2J\lambda_k}{\alpha_\ast^2-2J\lambda_k\sigma_\ast^2},
\end{equation}
where $\lambda_k$ denotes the eigenvalue of the discrete lattice Laplacian for mode $k$.

\paragraph{Marginalizing the auxiliary field: effective coupling reduction}

We first integrate out the auxiliary field $\psi$ to obtain the effective interaction acting on the clean variables $\phi$. Because both the forward kernel and the context are Gaussian in $\psi$, the resulting quadratic interaction is diagonal in Fourier space. The effective stiffness of mode $k$ in replica $r$ is
\begin{equation}
\kappa_k^{(r)}
=
\alpha_r^2\left(\sigma_r^2 + (q_k^\ast)^{-1}\right)^{-1}.
\end{equation}
Substituting
\begin{equation}
(q_k^\ast)^{-1}
=
\frac{\alpha_\ast^2}{2J\lambda_k}-\sigma_\ast^2,
\end{equation}
gives
\begin{equation}
\kappa_k^{(r)}
=
\frac{2J\lambda_k\,\alpha_r^2}
{\alpha_\ast^2 + 2J\lambda_k(\sigma_r^2-\sigma_\ast^2)}.
\label{eq:si_phi4_kappa}
\end{equation}
For the production replica $r=\ast$, this reduces to
\begin{equation}
\kappa_k^{(\ast)}=2J\lambda_k,
\end{equation}
which exactly recovers the original physical coupling.

For higher-noise replicas with $\sigma_r>\sigma_\ast$, the denominator in Eq.~\eqref{eq:si_phi4_kappa} is strictly larger, implying a mode-dependent screening of the interaction. In the infrared limit $\lambda_k\to 0$ relevant for ordering transitions,
\begin{equation}
\kappa_k^{(r)}
\approx
2J_{\mathrm{eff}}^{\mathrm{IR}}(r)\lambda_k,
\qquad
J_{\mathrm{eff}}^{\mathrm{IR}}(r)
\approx
J\frac{\alpha_r^2}{\alpha_\ast^2}.
\label{eq:si_phi4_jeff}
\end{equation}
Since $\alpha_r^2<\alpha_\ast^2$ for larger diffusion times, the exploring replicas effectively experience a weaker infrared coupling, which can drive them into the disordered regime even when the production replica remains ordered.

\paragraph{Marginalizing the clean field: topological softening}

A complementary interpretation is obtained by integrating out the clean field $\phi$ instead. This yields
\begin{equation}
\Pi_r(\psi)
\propto
\exp\!\left(-\frac{1}{2}\psi^\top Q_\ast \psi\right)
\prod_i
\int d\phi_i\, p_0(\phi_i)\,
\mathcal{N}(\psi_i;\alpha_r\phi_i,\sigma_r^2).
\label{eq:si_phi4_psi_marginal}
\end{equation}
Defining the effective generative noise
\begin{equation}
\tilde{\sigma}_r=\frac{\sigma_r}{\alpha_r},
\end{equation}
the on-site integral becomes
\begin{equation}
\int d\phi_i\, p_0(\phi_i)\,\mathcal{N}(\psi_i;\alpha_r\phi_i,\sigma_r^2)
=
\frac{1}{\alpha_r}
\left(
p_0 * \mathcal{N}(0,\tilde{\sigma}_r^2)
\right)\!\left(\frac{\psi_i}{\alpha_r}\right),
\end{equation}
so the effective local potential for $\psi_i$ is the blurred double-well
\begin{equation}
V_{\mathrm{blur}}^{(r)}(\psi_i)
=
-\log
\left[
\left(
p_0 * \mathcal{N}(0,\tilde{\sigma}_r^2)
\right)\!\left(\frac{\psi_i}{\alpha_r}\right)
\right]
+ \mathrm{const}.
\label{eq:si_phi4_vblur}
\end{equation}
As $\tilde{\sigma}_r$ increases, the local barrier between the two wells decreases and eventually disappears, so the high-noise replicas evolve in a substantially softened landscape.

Figure~\ref{fig:si_re_psi} validates this second viewpoint directly on the auxiliary field. The auxiliary-field order parameter and susceptibility exhibit the same ordering transition seen in the clean field, while the effective barrier height $\Delta V_{\mathrm{blur}}$ decreases rapidly with $\tilde{\sigma}_r$ and eventually vanishes. This confirms that the large-$t$ replicas mix efficiently because they explore a landscape in which local symmetry-breaking barriers have already been smoothed out by the forward corruption.

\begin{figure}[hbt]
    \centering
    \includegraphics[width=\linewidth]{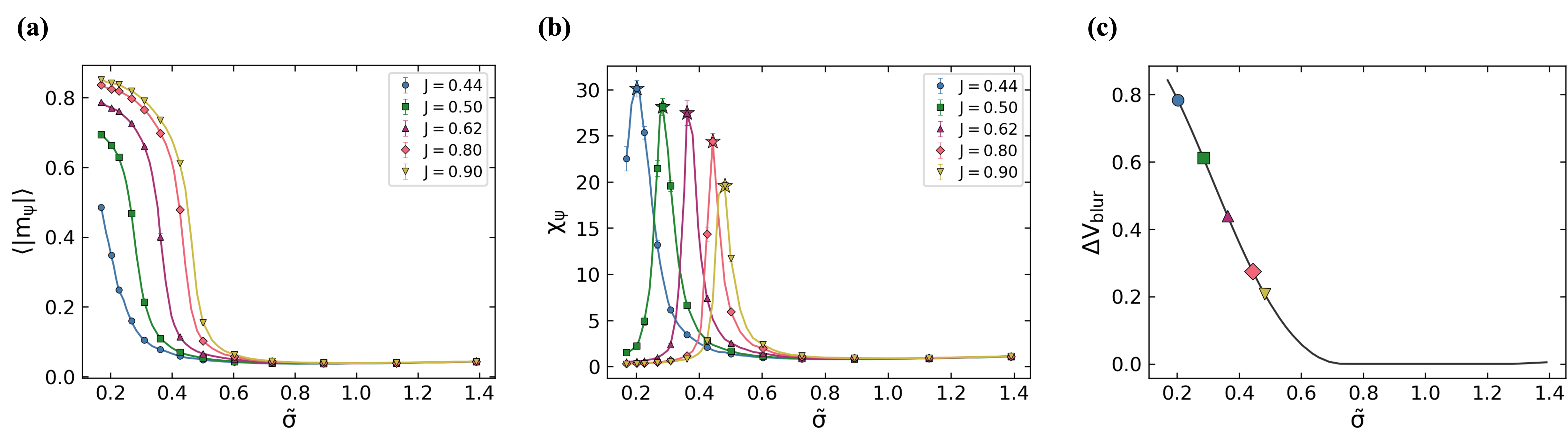}
    \caption{
    \textbf{Thermodynamic behavior and topological softening from the auxiliary-field perspective.}
    (a) Order parameter $\langle |m_\psi| \rangle$ and (b) susceptibility $\chi_\psi$ measured on the auxiliary field $\psi$ across diffusion replicas. As in the clean-field observables, increasing $\tilde{\sigma}_r$ drives the auxiliary field across an ordering transition. (c) Effective local barrier height $\Delta V_{\mathrm{blur}}$ of the blurred on-site potential in Eq.~\eqref{eq:si_phi4_vblur}. The barrier decreases rapidly with generative noise and eventually vanishes, revealing the local softening mechanism that facilitates mixing in the high-noise replicas.
    }
    \label{fig:si_re_psi}
\end{figure}

Taken together, these two views show that the replica coordinate $\tilde{\sigma}_r$ is not only an algorithmic diffusion parameter but also a physically meaningful softening parameter for the fixed-$Q$ ladder. Under this construction, diffusion-time replica exchange plays a role analogous to Hamiltonian parallel tempering: it accelerates sampling by connecting the target ensemble to a family of auxiliary ensembles with weakened long-range ordering tendencies and softened local barriers.
\subsection{Additional results for the alanine dipeptide systems}

\subsubsection{AD--$\text{Na}^+$ system}

To complement the O--O-distance analysis in the main text, we visualize the Ramachandran maps of vacuum MD, ion-coupled MD, and GG-PA for the AD--$\text{Na}^+$ system. As an additional robustness study, we also vary the fraction of target ion-coupled data used to train the generative prior while keeping the total amount of training data fixed, and compare Direct Diffusion with GG-PA. The evaluation metric is the Jensen--Shannon (JS) divergence between the sampled O--O distance distribution and the ion-coupled MD reference.

\begin{figure}[hbt]
    \centering
    \includegraphics[width=\linewidth]{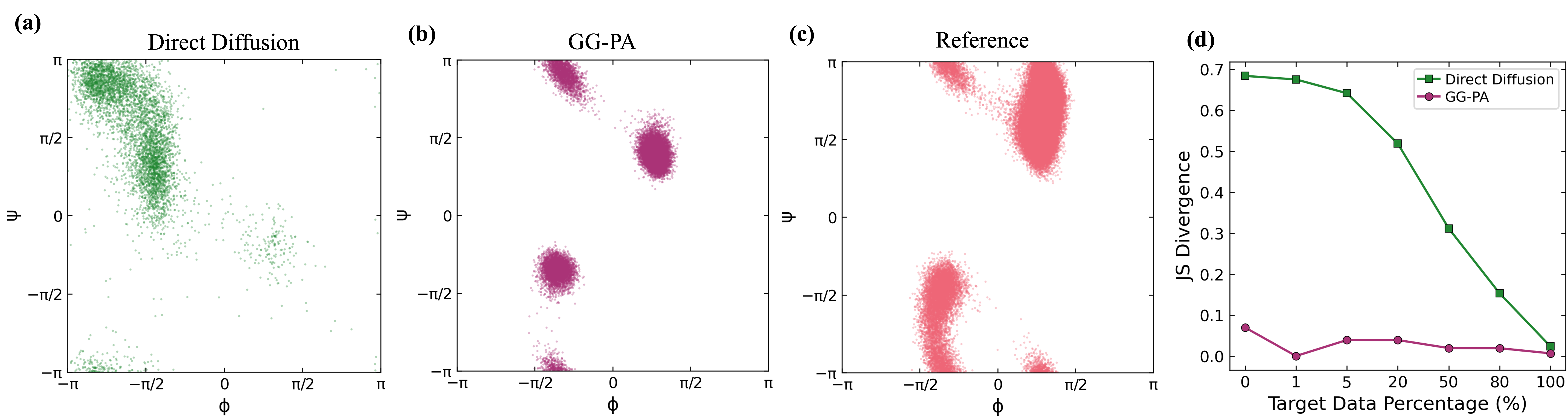}
    \caption{
    \textbf{Ramachandran maps and target-data interpolation for the AD--$\text{Na}^+$ system.}
    (a--c) Ramachandran maps from Direct Diffusion, GG-PA, and ion-coupled MD, respectively.
    (d) JS divergence of the O--O distance distribution versus the percentage of target ion-coupled data used to train the generative prior. Direct Diffusion approaches the reference only when nearly all training data are target-coupled, whereas GG-PA reaches low divergence with very little target data.
    }
    \label{fig:si_adsodium}
\end{figure}

As shown in Fig.~\ref{fig:si_adsodium}, the ion-coupled reference distribution occupies two dominant torsional basins that differ markedly from the vacuum ensemble. GG-PA reproduces these ion-coupled modes much more faithfully than a vacuum-trained generative model alone, consistent with the context-induced coordination shift described in the main text. The target-data interpolation experiment in Fig.~\ref{fig:si_adsodium}(d) further shows that GG-PA reaches near-minimal JS divergence already with a very small fraction of target data, whereas Direct Diffusion requires nearly full target data to approach the reference distribution. The small non-monotonic variations in the GG-PA curve are consistent with finite-sample fluctuations rather than a systematic degradation. 
\subsubsection{AD dimer system}
\label{sec:si_ad_dimer_results}

We analyze the GG-PA-RE production samples for the main dimer results and use fixed-\(t\) GG-PA runs only as supplementary comparisons. The fixed-\(t\) runs use
\[
t\in\{0.1,\,0.15,\,0.25,\,0.4,\,0.6,\,0.8\},
\]
with the same number of trajectories and sweeps per trajectory as in the replica-exchange runs.

\begin{figure}[hbt]
    \centering
    \includegraphics[width=\linewidth]{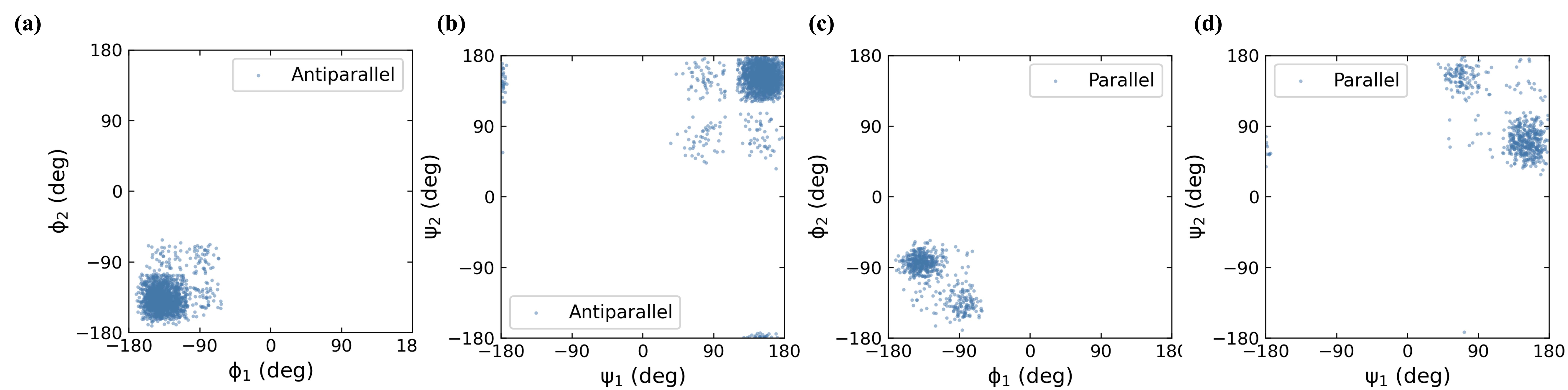}
    \caption{
    \textbf{Ramachandran-space visualization of topology-dependent organization in the AD dimer.}
    (a--b) \(\phi_1\) vs.\ \(\phi_2\) and \(\psi_1\) vs.\ \(\psi_2\) for anti-parallel dimers.
    Anti-parallel dimers are dominated by the shared-basin \(LL\) organization.
    (c--d) Corresponding plots for parallel dimers.
    Parallel dimers preferentially occupy the inequivalent \(LR\) and \(RL\) branches.
    }
    \label{fig:si_addimer_rama}
\end{figure}

Fig.~\ref{fig:si_addimer_rama} visualizes the symmetry-broken organization summarized in the main text: anti-parallel dimers concentrate in the \(LL\) branch, whereas parallel dimers prefer \(LR/RL\). We next analyze the residual unassigned category \(U\), since the main-text heatmaps report
\[
P(\mathrm{topology},\mathrm{basin\ combination}\mid \mathrm{not}\ U),
\]
with \(p_U=P(U)\) reported separately.

\paragraph{Residual unassigned category.}
As defined in Appendix~\ref{subsec:ad_dimer_system}, a dimer frame is assigned to a topology--basin category only if it passes the contact criterion, has a well-defined anti-parallel or parallel orientation, and has both monomers in the predefined \(L/R\) torsional basins. Frames failing any of these criteria are placed in the residual unassigned category \(U\). Thus, \(U\) is not a separate thermodynamic state, but a residual category induced by the coarse assignment scheme.

Fig.~\ref{fig:si_addimer_u_breakdown} decomposes all frames by the first failed assignment step. ``Assigned'' frames pass all criteria and enter the main heatmaps. ``No dimer-like contact'' denotes frames that do not satisfy the conservative hydrogen-bond/reciprocal-contact diagnostic. ``Ambiguous topology'' denotes contacted frames with \( |\cos\theta|<0.8 \), whose geometry cannot be reliably classified as anti-parallel or parallel. ``Off-basin torsion'' denotes frames that pass contact and orientation criteria but have at least one monomer outside the \(L/R\) basins.

\begin{figure}[hbt]
    \centering
    \includegraphics[width=0.85\linewidth]{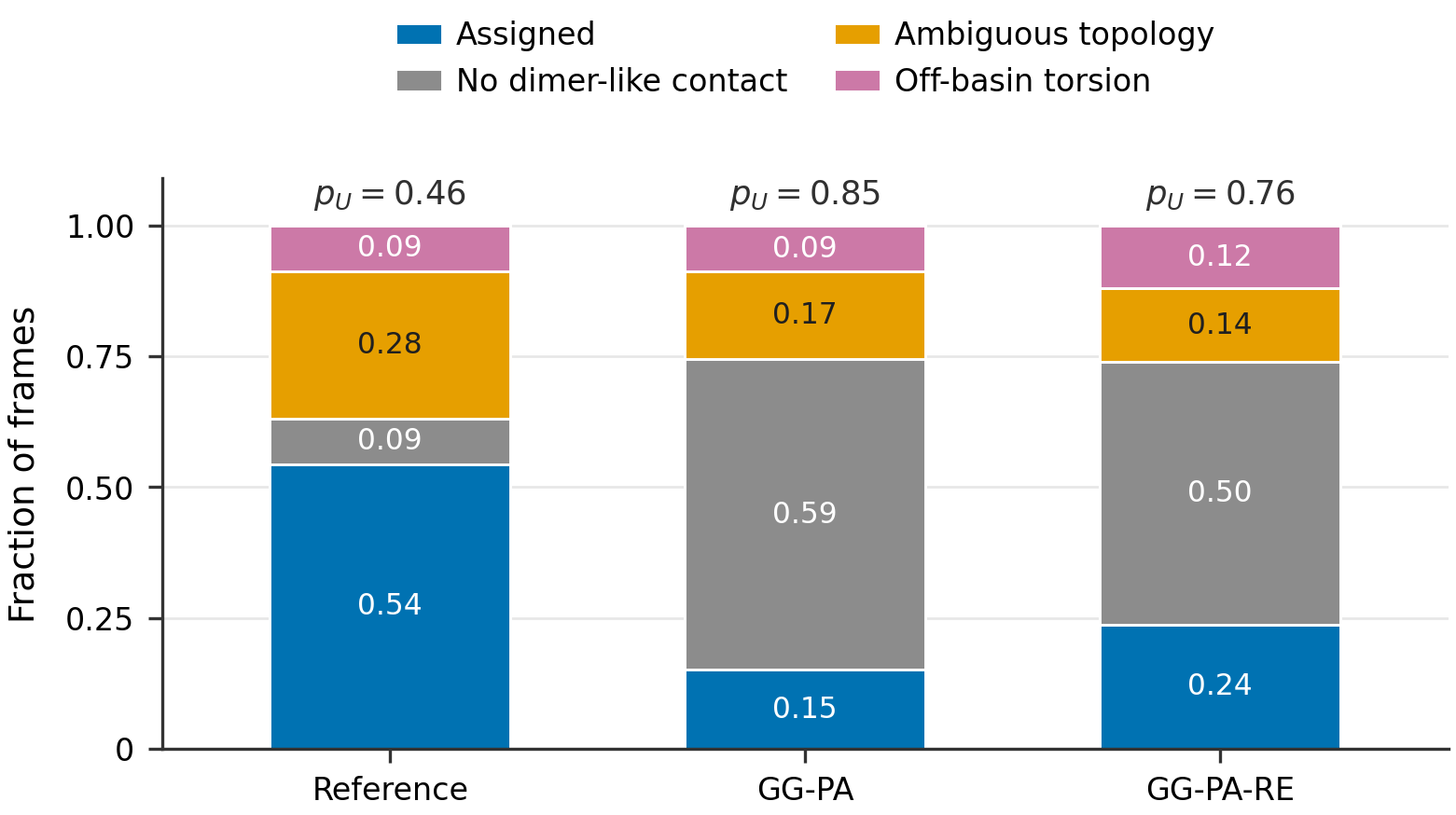}
    \caption{
    \textbf{Decomposition of the residual unassigned category in the AD dimer.}
    Bars show the fraction of frames assigned to the main topology--basin categories or placed in the residual category \(U\), decomposed by the first failed assignment step.
    Assigned frames satisfy all three criteria: \(N_{\mathrm{HB}}\geq 2\) with reciprocal contact, \( |\cos\theta|\geq 0.8 \), and both monomers in the predefined \(L/R\) torsional basins.
    No-dimer-like-contact frames fail the hydrogen-bond/reciprocal-contact diagnostic.
    Ambiguous-topology frames pass contact criteria but have \( |\cos\theta|<0.8 \).
    Off-basin-torsion frames pass contact and orientation criteria but have at least one monomer outside the \(L/R\) basins.
    The scalar \(p_U=P(U)\) is shown above each bar.
    }
    \label{fig:si_addimer_u_breakdown}
\end{figure}

We further examined the monomer torsions associated with \(U\). In Fig.~\ref{fig:si_addimer_u_torsions}, each point is one monomer torsion from a dimer frame. Most off-basin monomers lie near the boundaries of the \(L\) and \(R\) assignment regions. GG-PA-RE also samples a small off-basin torsional cluster on the right side of the Ramachandran map, denoted \(X\) for diagnostic purposes. The \(X\) label is not included as an assigned state in the main heatmaps.

\begin{figure}[hbt]
    \centering
    \includegraphics[width=\linewidth]{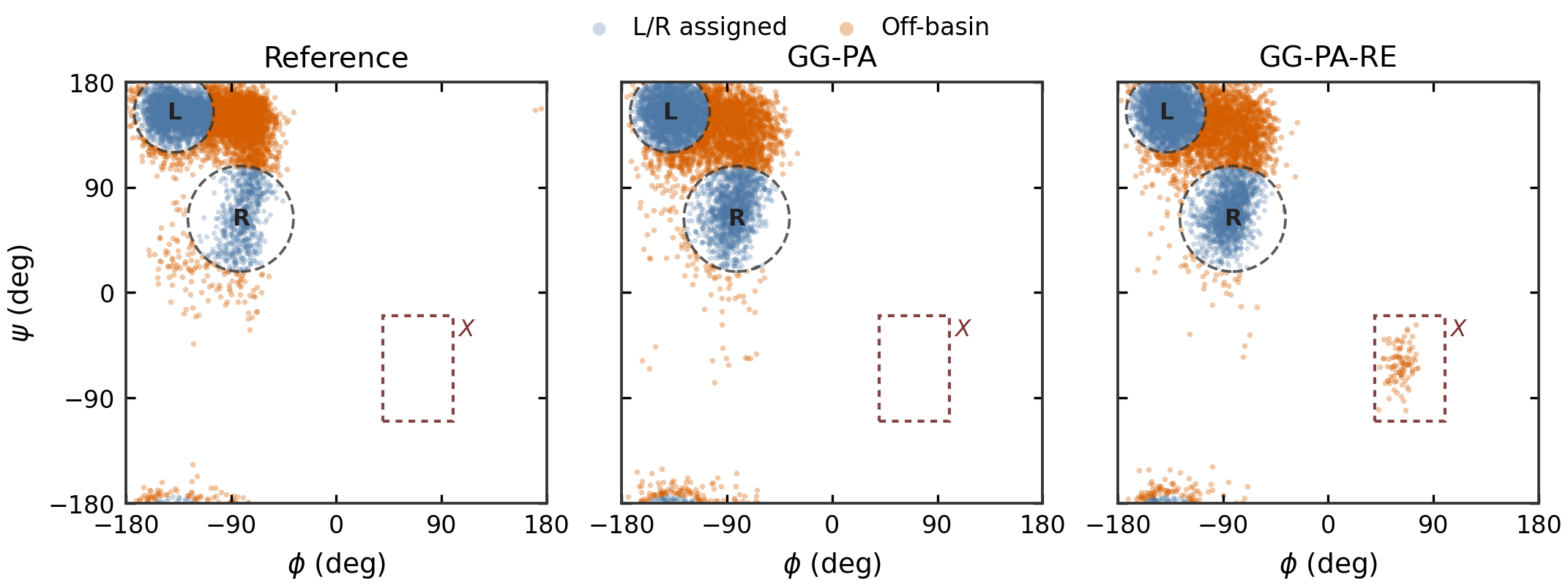}
    \caption{
    \textbf{Monomer torsions associated with the AD dimer residual category.}
    Each point is one monomer torsion from a dimer frame.
    Blue points fall within the predefined \(L\) or \(R\) torsional basin regions, and orange points lie outside these regions.
    The dashed box marks a diagnostic off-basin region \(X\).
    GG-PA-RE samples a small \(X\)-region cluster that is not appreciably populated in the finite MD reference or fixed-\(t\) GG-PA samples analyzed here.
    }
    \label{fig:si_addimer_u_torsions}
\end{figure}

Table~\ref{tab:si_addimer_x_diagnostics} summarizes the \(X\)-containing frames in GG-PA-RE. A substantial fraction satisfy both the contact and orientation criteria, indicating that they are not simply classified as non-contacting or completely disordered. Among these dimer-like \(X\)-containing frames, the partner monomer most often lies in the \(R\) basin, with a smaller fraction in an \(L\)-side torsional region.

\begin{table}[hbt]
\centering
\caption{
\textbf{Diagnostics for \(X\)-containing frames in GG-PA-RE.}
A frame is \(X\)-containing if at least one AD monomer lies in the diagnostic off-basin torsional region \(X\).
A frame is dimer-like if it satisfies both the contact criterion and the anti-parallel/parallel orientation criterion.
Partner-monomer statistics are computed over dimer-like \(X\)-containing frames, where the partner is the non-\(X\) monomer.
These quantities are used only to diagnose the residual category and are not interpreted as evidence for a separate thermodynamic state.
}
\label{tab:si_addimer_x_diagnostics}
\begin{tabular}{lccc}
\toprule
Diagnostic & Count & Total Frames & Fraction \\
\midrule
\multicolumn{4}{l}{\emph{\(X\)-containing frames}} \\
Dimer-like & 242 & 417 & 0.58 \\
Not dimer-like & 175 & 417 & 0.42 \\
\midrule
\multicolumn{4}{l}{\emph{Partner monomer among dimer-like \(X\)-containing frames}} \\
Partner in \(R\) basin & 206 & 240 & 0.86 \\
Partner in \(L\)-side region & 34 & 240 & 0.14 \\
\bottomrule
\end{tabular}
\end{table}

The finite MD reference trajectories and fixed-\(t\) GG-PA runs analyzed here do not show appreciable population of the \(X\)-containing motif under the same diagnostic mask. Because the MD reference is finite, this absence does not determine whether the motif has equilibrium weight under the full MD target. We therefore treat \(X\) as an unresolved rare/off-basin motif and use it only to diagnose residual differences between GG-PA-RE and the finite MD reference.

Following the same topology--basin coarse-graining, we quantify decorrelation using the state-indicator observables
\[
q_{\mathrm{topo}} =
\begin{cases}
+1, & \text{for any } P_{ab},\\
-1, & \text{for any } A_{ab},\\
0, & \text{for } U,
\end{cases}
\qquad
q_{\mathrm{SB}} =
\begin{cases}
+1, & \text{for } P_{LR},\\
-1, & \text{for } P_{RL},\\
0, & \text{otherwise}.
\end{cases}
\]
Here, $q_{\mathrm{topo}}$ measures switching between the two dimer topologies, while $q_{\mathrm{SB}}$ probes switching between the two symmetry-broken branches within the parallel topology. For single-trajectory estimates of $\tau_{\mathrm{int}}(q_{\mathrm{SB}})$, we retain only trajectories that visit both $P_{LR}$ and $P_{RL}$ after burn-in; trajectories that do not satisfy this condition are omitted from the corresponding averages. We also report
\[
\tau_{\mathrm{int}}^{\max}
=
\max\!\left(
\tau_{\mathrm{int}}(q_{\mathrm{topo}}),
\tau_{\mathrm{int}}(q_{\mathrm{SB}})
\right).
\]

To quantify sampling accuracy, we compute the topology-resolved pooled Jensen--Shannon divergence over the absolute torsional difference $|\Delta\psi|=|\psi_1-\psi_2|$,
\begin{equation}
JS_{\psi}
=
JS_{\mathrm{anti}}(|\Delta\psi|)
+
JS_{\mathrm{para}}(|\Delta\psi|),
\end{equation}
between the sampled data and the reference distribution.

\begin{figure}[hbt]
    \centering
    \includegraphics[width=\linewidth]{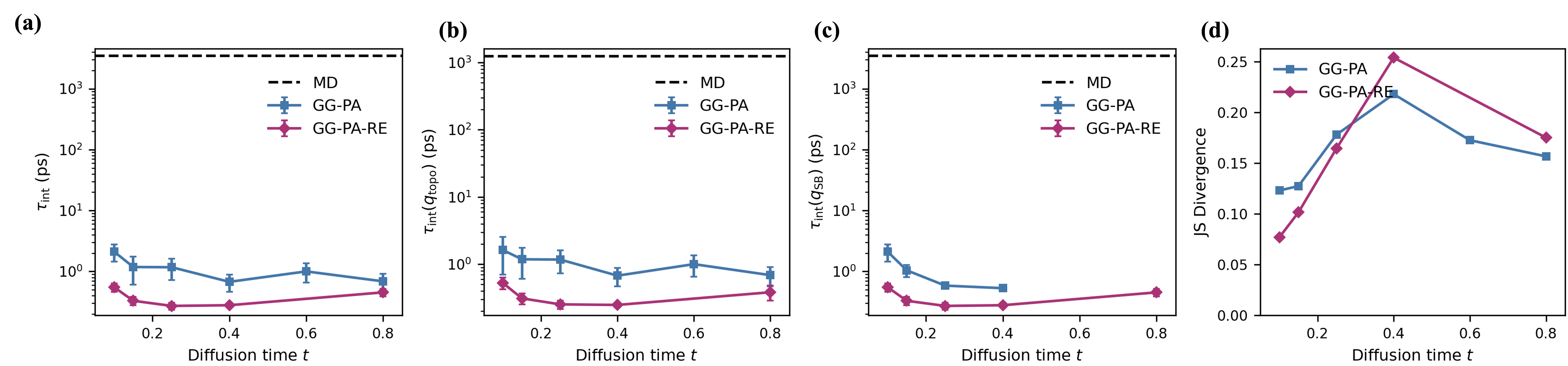}
    \caption{
    \textbf{Autocorrelation and accuracy across diffusion time in the AD dimer system.}
    (a) $\tau_{\mathrm{int}}^{\max} = \max\!\bigl(\tau_{\mathrm{int}}(q_{\mathrm{topo}}), \tau_{\mathrm{int}}(q_{\mathrm{SB}})\bigr)$.
    Dashed lines indicate the MD reference.
    (b) Integrated autocorrelation time $\tau_{\mathrm{int}}(q_{\mathrm{topo}})$.
    (c) Integrated autocorrelation time $\tau_{\mathrm{int}}(q_{\mathrm{SB}})$.
    (d) Topology-resolved pooled JS divergence over $|\Delta\psi|$ versus diffusion time for fixed-$t$ GG-PA and GG-PA-RE.
    }
    \label{fig:si_addimer_iat}
\end{figure}

The results are shown in Fig.~\ref{fig:si_addimer_iat}. For the noisy physical trajectories, the MD reference has very long decorrelation times, with $\tau_{\mathrm{int}}(q_{\mathrm{topo}}) \approx 1200~\mathrm{ps}$ and $\tau_{\mathrm{int}}(q_{\mathrm{SB}}) \approx 3500~\mathrm{ps}$. In contrast, fixed-$t$ GG-PA typically decorrelates on the $\sim 1$--$2~\mathrm{ps}$ scale, while GG-PA-RE further reduces the dominant timescale to sub-ps to $\sim 1~\mathrm{ps}$ over most of the diffusion-time range. Replica exchange therefore accelerates both topology switching and within-topology symmetry switching. This is particularly evident for $t>0.4$ in Fig.~\ref{fig:si_addimer_iat}(c), where within-topology switching becomes essentially absent without replica exchange. We emphasize that these autocorrelation times quantify sampler decorrelation for the recorded noisy observables; they should not be interpreted as physical kinetics of the unmodified dimer.

Because the diffusion prior in the dimer controls only a subset of the full atomistic degrees of freedom, namely the backbone dihedrals, the JS divergence is not expected to vary monotonically with diffusion time. Nevertheless, smaller diffusion times still correspond to stronger prior consistency.


\end{document}